\xdef\csname cal\x\endcsname{\noexpand\ensuremath{\noexpand\mathcal{\x}}}
\xdef\csname frak\x\endcsname{\noexpand\ensuremath{\noexpand\mathfrak{\x}}}
\xdef\csname bb\x\endcsname{\noexpand\ensuremath{\noexpand\mathbb{\x}}}
\xdef\csname sf\x\endcsname{\noexpand\ensuremath{\noexpand\mathsf{\x}}}
\xdef\csname bf\x\endcsname{\noexpand\ensuremath{\noexpand\mathbf{\x}}}
\xdef\csname cal\x\endcsname{\noexpand\ensuremath{\noexpand\mathcal{\x}}}
\xdef\csname frak\x\endcsname{\noexpand\ensuremath{\noexpand\mathfrak{\x}}}
\xdef\csname bb\x\endcsname{\noexpand\ensuremath{\noexpand\mathbb{\x}}}
\xdef\csname sf\x\endcsname{\noexpand\ensuremath{\noexpand\mathsf{\x}}}
\xdef\csname \x\endcsname{\noexpand\ensuremath{\noexpand\mathsf{\x}}}
\xdef\csname \x\endcsname{\noexpand\ensuremath{\noexpand\mathcal{\x}}}
\xdef\csname \x\endcsname{\noexpand\textbf{\x}}
\newcommand{\mref}[1]{\ensuremath{\mbox{\scriptsize{\ref{#1}}}}}
\newcommand{\etc}{etc.\xspace }
\newcommand{\resp}{resp.\xspace }
\newcommand{\ie}{i.\,e.\xspace }
\newcommand{\eg}{e.\,g.\xspace }
\newcommand{\viz}{viz.\xspace }
\newcommand{\wrt}{w.\,r.\,t.\xspace }
\newtheorem{definition}{Definition}
\newtheorem{example}{Example}
\newtheorem{theorem}{Theorem}
\newtheorem{proposition}{Proposition}
\newtheorem{corollary}{Corollary}
\newtheorem{lemma}{Lemma}
\newcommand{\head}[1][r]{\ensuremath{H(#1)}\xspace}
\newcommand{\bodyp}[2][r]{\ensuremath{B^{#2}(#1)}\xspace}
\newcommand{\body}[1][r]{\ensuremath{B(#1)}\xspace}
\newcommand{\bodyx}[1][r]{\ensuremath{B^*(#1)}\xspace}
\newcommand{\ol}[1]{\ensuremath{\overline{#1}}}
\newcommand{\rowprefix}[1]{%
  \ensuremath{#1{:}\ }%
}
\newcommand{\progprefix}[1]{%
  \ensuremath{#1{:}~~~}%
}
\DeclareMathOperator{\when}{\leftarrow}
\DeclareMathOperator{\nott}{\mathit{not}}
\newcommand{\nlit}[1]{\ensuremath{\ol{#1}}}
\newcommand{\pp}{\ensuremath{\calP}\xspace}
\newcommand{\unthicksim}{\mathord{\thicksim}}
\newcommand{\dcop}{\ensuremath{\unthicksim}}
\newcommand{\dc}[1]{\ensuremath{\dcop#1}}
\newcommand{\pAtoms}{\ensuremath{\calA}\xspace}
\newcommand{\LitA}{\ensuremath{Lit_\pAtoms}\xspace}
\newcommand{\Litx}{\ensuremath{Lit^\ast}\xspace}
\newcommand{\LitxA}{\ensuremath{\Litx_\pAtoms}\xspace}
\newcommand{\LitExt}{\ensuremath{L^*}\xspace}
\newcommand{\LitExtPrime}{\ensuremath{K^*}\xspace}
\newcommand{\nottt}{{\thicksim}}
\newcommand{\notttt}{{\sim}}
\newcommand{\inputpp}{\ensuremath{\pp_{\dbE}}\xspace}
\newcommand{\corepp}{\ensuremath{\pp_{\dbI}}\xspace}
\newcommand{\atom}[1]{\ensuremath{atom(#1)}}
\newcommand{\Atom}[1]{\ensuremath{Atom(#1)}}
\newcommand{\Lxall}[1]{\ensuremath{\calL^*({#1})}\xspace}
\newcommand{\litset}{\ensuremath{S}\xspace}
\newcommand{\litsett}{\ensuremath{T}\xspace}
\newcommand{\litsetx}{\ensuremath{X}\xspace}
\newcommand{\dbI}{\ensuremath{\calC}\xspace}
\newcommand{\dbE}{\ensuremath{\calF}\xspace}
\newcommand{\ans}[1]{\ensuremath{AS(#1)}\xspace}
\newcommand{\aspp}{\ensuremath{\ans{\pp}}\xspace}
\newcommand{\ppi}[1]{\ensuremath{\pp^{(#1)}}\xspace}
\newcommand{\Confl}[1]{\ensuremath{\mathit{Conflicts}({#1})}}
\newcommand{\confl}[1]{\ensuremath{\mathit{conflicts}({#1})}}
\DeclareMathOperator{\conf}{\bowtie}
\DeclareMathOperator{\nconf}{\centernot\bowtie}
\newcommand{\conflr}[1]{\ensuremath{Adv(#1)}}
\newcommand{\pc}{\ensuremath{r \conf r'\text{ (possibly)}}}
\newcommand{\nc}{\ensuremath{r \nconf r'}}
\newcommand{\blanket}{\ensuremath{\beta}\xspace}
\newcommand{\lambdasol}{\ensuremath{\lambda}\xspace}
\newcommand{\lambdasolrule}[1]{\ensuremath{\lambda(#1)}\xspace}
\newcommand{\negLxOp}{\ensuremath{neg}\xspace}
\newcommand{\NegLxOp}{\ensuremath{Neg}\xspace}
\newcommand{\NegLXOp}{\ensuremath{N}\xspace}
\newcommand{\negLx}[1]{\ensuremath{\negLxOp(#1)}\xspace}
\newcommand{\NegLx}[1]{\ensuremath{\NegLxOp(#1)}\xspace}
\newcommand{\NegLX}[1]{\ensuremath{\NegLXOp(#1)}\xspace}
\newcommand{\negLxdep}[2]{\ensuremath{\negLxOp_{#2}(#1)}\xspace}
\newcommand{\NegLXminOp}{\ensuremath{\NegLXOp_{min}}\xspace}
\newcommand{\NegLXmin}[1]{\ensuremath{\NegLXminOp(#1)}\xspace}
\newcommand{\blanketop}[1]{\ensuremath{blankets(#1)}\xspace}
\newcommand{\advbodies}[1]{\ensuremath{\calA\calB(#1)}\xspace}
\newcommand{\extAtoms}{\ensuremath{\calE_{\pp}}\xspace}
\newcommand{\LitE}{\ensuremath{Lit\extAtoms}\xspace}
\newcommand{\rulset}{\ensuremath{\calR}\xspace}
\newcommand{\ppgen}{\ensuremath{\pp^{+}}\xspace}
\newcommand{\arop}{\ensuremath{\mathit{AR}}}
\newcommand{\ar}[1]{\ensuremath{\arop(#1)}\xspace}
\newcommand{\eff}[2]{\ensuremath{\mathit{Eff}(#1,#2)}\xspace}
\newcommand{\exLabel}[1]{\ensuremath{\mathit{#1}}\xspace}
\newcommand{\exEligX}{\exLabel{\mathit{eligX}}}
\newcommand{\exCond}[1]{\exLabel{cond{#1}}\xspace}
\newcommand{\exCondA}{\exCond{A}}
\newcommand{\exCondAAdv}{\exCondA{Adv}}
\newcommand{\exPreTreated}[1]{\exLabel{preTreated{#1}}}
\newcommand{\exPreTreatedM}{\exPreTreated{M}}
\newcommand{\exPreTreatedN}{\exPreTreated{N}}
\newcommand{\exHighLCount}{\exLabel{highLCount}}
\newcommand{\exDrug}[1]{\exLabel{drug{#1}}}
\newcommand{\exCiR}{\exLabel{ctrIndR}}
\newcommand{\exCiS}{\exLabel{ctrIndS}}
\newcommand{\exDrugB}{\exDrug{B}}
\newcommand{\exDrugC}{\exDrug{C}}
\newcommand{\exDrugD}{\exDrug{D}}
\newcommand{\exDrugE}{\exDrug{E}}
\begin{document}

\lefttitle{Thevapalan and Kern-Isberner}

\jnlPage{1}{8}
\jnlDoiYr{20xx}
\doival{10.1017/xxxxx}

\title[On Establishing Robust Consistency in Answer Set Programs]
{On Establishing Robust Consistency in Answer Set Programs}

\begin{authgrp}
  \author{\gn{Andre} \sn{Thevapalan}}
  \affiliation{Technische Universität Dortmund, Dortmund, Germany}
  \author{\gn{Gabriele} \sn{Kern-Isberner}}
  \affiliation{Technische Universität Dortmund, Dortmund, Germany}
\end{authgrp}

\maketitle

\begin{abstract}
  Answer set programs used in real-world applications often require that the program is usable with different input data.
  This, however, can often lead to contradictory statements and consequently to an inconsistent program.
  Causes for potential contradictions in a program are conflicting rules.
  In this paper, we show how to ensure that a program \pp remains non-contradictory given any allowed set of such input data.
  For that, we introduce the notion of conflict-resolving \lambdasol-extensions.
  A conflict-resolving \lambdasol-extension for a conflicting rule $r$ is a set $\lambdasol$ of (default) literals such that extending the body of $r$ by $\lambdasol$ resolves all conflicts of $r$ at once.
  We investigate the properties that suitable \lambdasol-extensions should possess and building on that, we develop a strategy to compute all such conflict-resolving \lambdasol-extensions for each conflicting rule in \pp.
  We show that by implementing a conflict resolution process that successively resolves conflicts using \lambdasol-extensions eventually yields a program that remains non-contradictory given any allowed set of input data.

  \paragraph{Note:} This paper is under consideration for acceptance in Theory and Practice of Logic Programming (TPLP).
\end{abstract}

\begin{keywords}
  Logic Programming, Answer Set Programming, Consistency, Contradictions, Conflicts, Interactions
\end{keywords}

\section{Introduction}%
\label{sec:introduction}

\subsection{Motivation and Context}%
\label{sub:motivation_and_context}
  Answer set programs can be used to implement real-world applications like decision support systems that aid knowledge experts whenever crucial decisions based on different rules and conditions have to be made.
However, the knowledge bases of such applications are not static but rather very dynamic in the sense that they are adapted to each individual case and can also be prone to various updates.
Especially in domains like the medical sector, knowledge bases are expected to yield suitable decisions for each patient where patients can show diverse symptoms (implemented by facts), and the knowledge base is to be updated very often, \eg, every month.
In addition, physicians are usually faced with some degree of uncertainty and for example have to rely on own experience when they have to reach important decisions (\eg regarding the treatment of patients)~\citep{Ghosh2004}.
Imagine a system that outputs the possible treatment plans for a given patient based on a corresponding medical ruleset.
For each patient, the application has to combine the general knowledge about possible treatment plans (\emph{problem encoding}) with the data regarding the patient (\emph{problem instance})~\citep{GebserKaminskiKaufmannSchaub2012}.
Furthermore, the knowledge base containing the encoding can grow over time, not only adding new knowledge but also revising or removing deprecated knowledge.
  Additionally, decision-making processes in this sector do not only consist of, \eg, finding the right therapy for each patient, it is also important to assure that unfitting solutions are also reflected as such (\eg what therapies must not be recommended to a patient).
This emphasizes how the usage of an answer set program with both default and strong negation constitutes a highly valuable asset in assisting medical experts.

However, at any point where such an application is used, it has to be ensured that the respective knowledge base remains consistent when merged with the patient data.
For a knowledge base that is frequently updated and that also has to be used with different instance data, maintaining consistency can become a quite cumbersome task as it requires not only a complete understanding of the whole problem encoding but also technical knowledge regarding logic programming.
In previous approaches regarding the update of answer set programs~\citep{Eiter2002,AlferesLeitePereiraPrzymusinskaPrzymusinski1998}, every update required an automated adaptation of the updated program in order to prevent conflicts.
Using these methods for a medical support system would mean that the changes that are made in the new program are not managed and approved by the knowledge expert.
Naturally, these inconveniences can deter knowledge experts from using answer set programs for the implementation even though answer set programming itself is well suited for highly complex decision making problems.

It is for this reason that we propose a general framework that allows the maintenance of a logic program by involving the expertise of knowledge experts more directly.
Such maintenance tasks would include the detection of possible conflicting statements between the problem encoding and any possible instance data and the resolution of such conflicts.
Note that the consistency of an answer set program crucially depends on its facts, in particular, its input data.
Conflicts between rules may become apparent only if specific input data is provided, resulting in an unexpected failure in a specific (rare) case of an otherwise helpful and approved program.
Our approach aims at anticipating such conflicts, ensuring that a program yields professionally adequate solutions for any (future) case.
The resolution of such conflicts should be overseen by the knowledge expert and executed in an interactive fashion.
To facilitate the conflict resolution, possible solutions for each conflict should be generated and presented to the expert on demand.
They can then choose a most suitable solution.
In this way it is guaranteed that every modification that is applied to the knowledge base is valid in a professional sense.
Such an interactive exchange between the system and the expert during the resolution process can therefore eliminate the need for a deeper technical understanding of logic programming, giving the knowledge expert full control over the maintenance operations, guaranteeing that no changes are done blindly.
Especially in fields like the medical sector, full expert control over the knowledge base is crucial as small mistakes can have serious consequences.
The gap between a professional and a technical expert can be significant and make the automatic resolution of conflicts by suitable modifications of the program impossible. 
So, extensive communications between the two experts would be needed.  
The proposed framework constitutes a first step towards closing this gap as the conflict resolution process can be fully executed by the professional expert directly because all proposed modifications will be technically viable solutions of the conflict, and the expert can then choose from them according to professional standards.
Being able to establish the consistency of the problem encoding for every possible instance data without actually requiring the explicit instance has the effect that in practice, the program does not have to be validated for each instance, and makes it much more robust and reliable.

\subsection{Main Contributions}%
\label{sub:main_contributions}
In order to facilitate the integration of answer set programming in real-world applications, in this paper, we show how to ensure that the problem encoding in a logic program remains consistent given any (allowed) set of consistent instance data.
Inspired by~\citep{Eiter2002} and~\citep{AlferesLeitePereiraPrzymusinskaPrzymusinski1998,AlferesLeitePereiraPrzymusinskaPrzymusinski2000} and their work on logic program updates, we develop a strategy to extend rules of a logic program such that the derivation of contradictory statements is no longer possible in any admissible case.
The presented approach expands previous approaches in two major ways:

(1) Programs that are modified following our approach remain consistent for any given input data that do not contain atoms that appear in head literals of the program.
For that, we extend the notion of literals and investigate the relationship between the different negated versions of a literal.
This enables us to define the characteristics of conflicting and non-conficting rules, and in particular to determine how the body of a rule $r_i$ that is in conflict with several rules $r_j$ can be extended such that the modified rule $r'_i$ is no longer conflicting with any rule $r_j$ while the extension comprises only informative literals.
By informative, we mean that these extensions should only be composed of literals whose atoms appear in the body of the rules involved in the conflict.
For that reason, we also adapt the notion of \emph{hitting sets} for (default) literals in logic program rules.
As a result, the meaningfulness of the problem encoding is validated beforehand, eliminating the need for testing the encoding against all possible instances or other precautionary measures for each instance.

(2) Instead of using a technical device to prioritize one statement over another in case of a conflict like \emph{causal rejection \citep{Eiter2002}}, we show how all \emph{informative} extensions for a conflicting rule can be computed.
As a consequence, a knowledge expert could then be included into the resolution process and choose the suggested rule modification that is most suitable such that the resulting modified program remains professionally adequate.
Using informative extensions also maintains the readability of the rules in a program which in turn can simplify subsequent update operations.

  (3) The final goal is the construction of a  framework for the interactive maintenance of large ASP knowledge bases which can be used by knowledge experts without the need for technical knowledge and where each modification can be overseen by the expert.
  The approach presented in this paper is a basic building block for such a framework as the computation of all possible informative extensions can be used to resolve each conflict in cooperation with the knowledge expert which in turn guarantees that the resulting knowledge base still contains professionally adequate knowledge.
  We will also show that the approach follows a pragmatic paradigm which allows various extensions in order to facilitate the conflict resolution process for the expert.

\subsection{Structure of the Paper}%
\label{sub:structure_of_the_paper}
The paper is organized as follows:
We begin by providing the necessary preliminaries about extended logic programs in Section~\ref{sec:preliminaries}.
In Section~\ref{sec:contradictions}, we present our conflict resolution approach.
We start in Section~\ref{subsec:conflicts} by introducing the notion of uniformly non-contradictory program cores and examine the properties of conflicts and non-conflicting rules in Section~\ref{subsec:conflict_detection} before we connect these results in Section~\ref{subsec:conflict_resolution} by defining the properties of a conflict resolution step.
After describing  a na\"ive conflict resolution approach in Section~\ref{ssec:semi-normal-completion} by using semi-normal completions~\citep{Caminada2006}, we present in Section~\ref{subsec:conflict_resolution_with_informative_extensions} our strategy to compute all appropriate rule modifications of conflicting rules in order to obtain a uniformly non-contradictory program core.
In Section~\ref{sec:extensions}, we present additional ways to use and extend the conflict resolution approach for the usage in real-world applications.
Sections~\ref{subsec:m_to_n_conflicts} and~\ref{ssec:conflict_resolution_with_constraints} show how the presented method can be used to resolve many-to-many conflicts and how inconsistency that can be caused by a specific type of constraints can be prevented.
Sections~\ref{subsec:conflict_order} and~\ref{subsec:lambdasol_scores} outline ways to enhance the handling of \lambdasol-extensions in order to facilitate their usage in applications.
The paper concludes with a summary and a discussion regarding future work.

\section{Preliminaries}%
\label{sec:preliminaries}

\subsection{Extended Logic Programs}%
\label{subsec:extended_logic_programs}
In this paper, we look at non-disjunctive \emph{extended logic programs} (ELPs) \citep{Gelfond1991}.
An ELP is a finite set of rules over a set $\calA$ of propositional atoms.
First, we discuss the different forms of negation in ELPs and introduce notations.
A classical literal $L$ is either an atom $A$ (\emph{positive literal}) or a negated atom $\neg A$ (\emph{negative literal}).
For a literal $L$, the \emph{strongly complementary} literal $\nlit{L}$ is $\neg A$ if ${L = A}$ and $A$ otherwise.
For a set $\litset$ of classical literals, ${\nlit{\litset} = \{\nlit{L} \mid L \in \litset\}}$ is the set of corresponding strongly complementary literals.
Then, $\LitA$ denotes the set $\pAtoms \cup \nlit\pAtoms$ of all classical literals over $\pAtoms$.
A \emph{default-negated literal} $L$, called \emph{default literal}, is written as $\nottt L$.
In logic programs, $\nottt$ will be used as a prefix solely for classical literals, symbolyzing the default negation which is usually denoted by the prefix $\nott$ \citep{Gelfond1991}.
Outside of logic programs, we will use $\nottt$ as a unary junctor in order to describe the \emph{default complement} of a (default) literal $(\nottt)L$, \ie, the default complement of $L$ is $\nottt L$ and the default complement of $\nottt L$ is $\nottt \nottt L = L$.
This reflects the binary characteristic of the default negation which is illustrated in Figure~\ref{fig:negation}.
For a set $\litset$ of classical literals, we define $\dcop$ accordingly, \ie, ${\dc{\litset} = \{\dc{L} \mid  L\in \litset \}}$.
Given a set \litset of (classical) literals, we say a (classical) literal \emph{$L$ is true in \litset} (symbolically ${\litset \vDash L}$) iff ${L \in \litset}$ and \emph{$\nottt L$} is true in \litset (symbolically ${\litset \vDash \nottt L}$) iff ${L \notin \litset}$.
By an \emph{extended literal} \LitExt, we either mean a (classical) literal $L$ or a default-negated literal $\nottt L$.
The set of all extended literals over a set of atoms \pAtoms will be denoted by \LitxA, \ie, ${\LitxA = \LitA \cup \nottt \LitA}$.
A set \litsetx of extended literals is true in \litset  (symbolically $\litset \vDash \litsetx$) iff every extended literal ${\LitExt \in \litsetx}$ is true in \litset.
With ${\atom{\LitExt}}$, we will associate the atom on which the extended literal \LitExt is based on.
The underlying atoms of a set $X$ of extended literals is given by the set of atoms $\Atom{X} = \{\atom{\LitExt} \mid \LitExt \in X \}$.
Two extended literals \LitExt,\LitExtPrime are \emph{atom-related} if $\atom{\LitExt} = \atom{\LitExtPrime}$.

\begin{figure}
  \begin{tikzpicture}
    \draw (0,2.25) rectangle (1,2.75) node[pos=.5] {$true$};
    \draw (1,2.25) rectangle (2,2.75) node[pos=.5] {$undec$};
    \draw (2,2.25) rectangle (3,2.75) node[pos=.5] {$false$};

    \draw (0,1.5) rectangle (1,2) node[pos=.5] {$false$};
    \draw (1,1.5) rectangle (3,2) node[pos=.5] {$true$};

    \draw (0,.75) rectangle (2,1.25) node[pos=.5] {$true$};
    \draw (2,.75) rectangle (3,1.25) node[pos=.5] {$false$};

    \draw (0,0) rectangle (1,0.5) node[pos=.5] {$false$};
    \draw (1,0) rectangle (2,0.5) node[pos=.5] {$undec$};
    \draw (2,0) rectangle (3,0.5) node[pos=.5] {$true$};

    \draw (0,2.25) -- (0,2.75) node[anchor=east,pos=.5] {$A$};
    \draw (0,1.5) -- (0,2) node[anchor=east,pos=.5] {$\nottt A$};
    \draw (0,.75) -- (0,1.25) node[anchor=east,pos=.5] {$\nottt \nlit A$};
    \draw (0,0) -- (0,0.5) node[anchor=east,pos=.5] {$\nlit A$};
  \end{tikzpicture}
  \caption{Visualization of truth values of an atom A \wrt a set of literals $S$}\label{fig:negation}
\end{figure}

The following definition introduces handy terms to describe the relationships between literals.

\begin{definition}\label{def:ext-lit-combo-types}
  Given a classical literal $L$, we say that
  \begin{itemize}
    \item $L$ and $\nottt L$ are \emph{default complementary},
    \item $L$ and $ \nottt \nlit L$ are \emph{dual}, and
    \item $\nottt L$ and $\nottt \nlit L$ are \emph{reconcilable}.
  \end{itemize}
  Given two atom-related literals ${\LitExt,\LitExtPrime}$ with $\LitExt \neq \LitExtPrime$, we will say $\LitExt$ and $\LitExtPrime$ are
  \begin{itemize}
    \item \emph{complementary} if \LitExt, \LitExtPrime are strongly or default complementary, and
    \item \emph{compatible} if \LitExt, \LitExtPrime are reconcilable or dual.
  \end{itemize}
\end{definition}

A set of classical literals is \emph{inconsistent} if it contains strongly complementary literals.
A set of extended literals is \emph{inconsistent} if it contains complementary literals.

Figure~\ref{fig:atom-related} visualizes the different relationships that two atom-related literals \LitExt,\LitExtPrime can have.
The different negation types and the particular significance of reconcilable literals will be discussed in Section~\ref{subsec:conflict_resolution}.

\begin{figure}
  \begin{tikzpicture}[sibling distance=10em,
    every node/.style = {shape=rectangle,
    draw, align=center},
    level 1/.style={sibling distance=7cm,level distance=1.4cm},
    level 2/.style={sibling distance=3cm}
    ]
    \node {atom-related \\$\LitExt \neq \LitExtPrime$}
      child {node {complementary}
        child {node {strongly\\complementary\\$L,\nlit L$}}
      child {node {default-\\complementary\\$L,\nottt L$}}}
      child {node {compatible}
        child {node {reconcilable\\ $\nottt L,\notttt \nlit L$}}
      child {node {dual\\$L,\nottt \nlit L$}}};
  \end{tikzpicture}
  \caption{Possible reationships between two atom-related literals}%
  \label{fig:atom-related}
\end{figure}
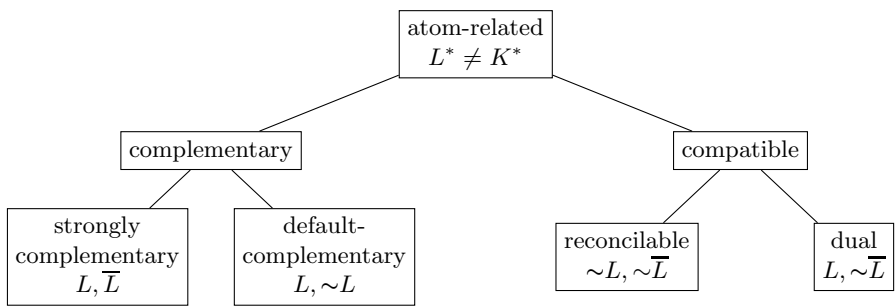
We are now ready to specify the form of ELPs.

A \emph{rule} $r$ is of the form
\begin{align}\label{eqn:rule}
  L_0 \when L_1, \dots, L_m, \nottt L_{m+1}, \dots, \nottt L_n.,
\end{align}
with classical literals $L_0, \dots, L_n$ and \mbox{$0 \leq m \leq n$}.
The literal $L_0$ is the \emph{head} of $r$, denoted by $\head$, and $\{L_1, \dots L_m, \nottt L_{m+1}, \dots \nottt L_n\}$ is the \emph{body} of $r$, denoted by \body.
Furthermore, $\{L_1, \dots, L_m\}$ is denoted by $\bodyp{+}$ and $\{L_{m+1}, \dots, L_n\}$ by $\bodyp{-}$.
Given a set of rules $\rulset \subseteq \pp$, we will denote the set of all extended literals occurring in the rule bodies of \rulset by $\bodyx[\rulset]$, \ie, $\bodyx[\rulset] = \bigcup_{r \in \rulset} \body$.
A rule $r$ with $\body = \emptyset$ is called a \emph{fact}, and $r$ is called a \emph{constraint} if it has an empty head.
For a fact $L.$, we will call the corresponding literal $L$ a \emph{fact literal}.
A set \dbE of facts is \emph{consistent} if the set of fact literals in \dbE is consistent.
A rule $r$ will be called a \emph{complex rule} if $r$ is neither a fact nor a constraint.
An \emph{extended logic program (ELP)} is a set of rules of the form (\ref{eqn:rule}).

The program \ppgen will denote the reduction of \pp to a \emph{normal program} (a logic program without classically negated literals) that is obtained by replacing every classically negated literal $\nlit A$ in \pp by a new corresponding atom $A'$ \citep{Gelfond1991}.

A rule $r$ is \emph{applicable} iff there is a consistent set \litset of classical literals such that $\litset \vDash \body$.
For the rest of this paper, we will assume that every rule in a given logic program is applicable.
Given a set \litset of classical literals, a (complex) rule $r$ \emph{is true in \litset} (symbolically ${\litset \vDash r}$) iff \head is true in \litset whenever \body is true in \litset.
In case $r$ is a constraint, $r$ is true in \litset iff $\litset \not\vDash$ \body.
Whenever a rule $r$ is true in \litset we also say that \litset \emph{satisfies} $r$.
A rule body \body will be called \emph{satisfiable} if there exists a consistent set $\litset$ of classical literals such that $\litset \vDash \body$.
Correspondingly, given a set of rules  $\rulset = \{r_1,\dots,r_n\}$, the bodies of all rules in \rulset are \emph{simultaneously satisfiable} whenever there is a set \litset of classical literals such that $\litset \vDash \body$ for every rule $r \in \rulset$.
Given an ELP \pp without default negation, the \emph{answer set} of \pp is
either (a) the smallest set $\litset \subseteq \LitA$ such that \litset is consistent and $\litset \vDash r$ for every rule $r \in \pp$,
or (b) the set \LitA of classical literals.
Note that similar to Horn logic programs, each such ELP has exactly one minimal model which might, however, be inconsistent.

In general, an \emph{answer set} of an ELP \pp is determined by its reduct.
The \emph{reduct} $\pp^\litset$ of a program \pp relative to a set $\litset$ of classical literals is defined by

\begin{align}
  \pp^\litset = \{H(r) \when \bodyp{+}. \mid r \in \pp, \bodyp{-} \cap \litset = \emptyset \}.
\end{align}

A set \litset of classical literals is an \emph{answer set} of \pp if it is the answer set of $\pp^\litset$ \citep{Gelfond1991}.
The set of all answer sets of a program \pp will be denoted by $AS(\pp)$.
Note that since every inconsistent set of literals that satisfies all rules in \pp will be replaced by \LitA, every reduct of \pp is formed relative to either a consistent set \litset or \LitA.
As a consequence, \aspp can either consist of only consistent sets of literals, only \LitA, or no sets at all.
We say a classical literal $L$ is \emph{derivable} in \pp iff ${L \in \bigcup AS(\pp)}$.
For a set \litsett of classical literals, we say all literals in \litsett are \emph{simultaneously derivable} iff there exists an answer set $\litset \in AS(\pp)$ such that $\litsett \subseteq \litset$.
Atoms that occur in the rule bodies of \pp but not in any rule head will be called \emph{external atoms} and the set of external atoms will be denoted by \extAtoms.
Atoms that occur in \pp that are not external will be called \emph{internal atoms}.
As stated in \citep{GelderRossSchlipf1991}, deductive databases are commonly viewed as logic programs as they consist of an \emph{external database}, which is a set of facts,  and the \emph{internal database}, which is a set of rules.
In software applications, often the same program is used with different input data.
This can also be applied to extended logic programs.
Similar to the division inside databases and the distinction between \emph{problem instance and encoding} as mentioned before, we partition an ELP \pp into a set \inputpp of facts which we will call \emph{input}, and a set \corepp of rules which will be called \emph{program core}.
We define that any \emph{valid input} \inputpp for \corepp is a consistent set of facts over \LitE and that \corepp can only comprise complex rules.
The set of all valid inputs \inputpp for a program core \corepp will be denoted by $I(\corepp)$.
By $\Pi(\corepp) = \{\corepp \cup \inputpp \mid \inputpp \in I(\corepp)\}$, we denote the set of all programs \corepp extended by a valid input \inputpp for \corepp.

In Section~\ref{subsec:conflicts}, we will explain why it is necessary to restrict \inputpp to literals in \LitE.

As mentioned above, default negation puts a classical literal $L$ into a binary state \wrt a set \litset of classical literals.
Either $L$ is true in \litset while $\nottt L$ is false in \litset or vice versa.
With regard to strong negation however, this is not the case.
In Figure~\ref{fig:negation}, this distinction is illustrated by the \emph{gap} between $true$ and $false$ on the level of $A$ and $\nlit A$.
Under three-valued semantics \citep{Przymusinski1991a}, this additional \emph{state} is referred to as \emph{undefined}.
Atoms $A$ and $\nlit A$ are undefined in \litset if neither $A$ nor $\nlit A$ are derivable in \litset.
Under answer set semantics, this undefinedness is covered by default literals, \ie, both $\nottt A$ and $\nottt \nlit A$ are simultaneously true whenever neither $A$ nor $\nlit A$ is derivable.
It is for this reason that we introduced the notion of reconcilable literals.
Even though for two reconcilable literals $\nottt A, \nottt \nlit A$, the atoms $A,\nlit A$ ``inside'' the literals are strongly complementary, the default negation allows that both literals are true simultaneously in $\litset$, \ie, $S \vDash \{\nottt A,\nottt \nlit A\}$ whenever $A, \nlit A \notin S$.

\subsection{Consistency}%
\label{subsec:consistency}
In \citep{Gelfond1991}, an answer set program \pp over $\calA$ is \emph{inconsistent} if \pp has no answer sets or its only answer set is $\LitA$.

  We will specify inconsistency in more detail by using the results in~\citep{Schulz2015} where the authors define four different cases how and why an ELP can be inconsistent.
  There, in the first three cases \emph{1, 2} and \emph{3a}, the inconsistency is essentially caused by classically complementary literals in the program, \ie, by their simultaneous derivation.
  In the fourth case \emph{3b}, a so called \emph{negative dependency path} exists that leads to the derivation of default-complementary literals.
  From a semantic viewpoint, in case \emph{1} and \emph{2}, \pp has no well-founded models.
  In case \emph{3a}, \pp has a well-founded model, and the normal program \ppgen of \pp has one or more answer sets.
  In case \emph{3b}, \pp has a well-founded model but \ppgen has no answer sets.
  We will say that an inconsistent ELP \pp is \emph{contradictory} if the inconsistency is caused by classically complementary literals (case 1, 2 or 3a).
  Otherwise, \pp will be called \emph{incoherent} (case 3b)\footnote{Note that the inconsistency type is not solely determined by the answer set semantics of \pp but also by using well-founded semantics and the normal program of \pp.
  For more information on the inconsistency types, we refer the reader to~\citep{Schulz2015,Inoue1993}.}.
  In this work, we will only consider contradictory programs.
  The handling of incoherent programs is examined in~\citep{ThevapalanHeyninckKernIsberner2021} as well as in~\citep{Costantini2006}.

The presented method will guarantee that a modified ELP \pp is not contradictory with any valid input \inputpp.
For this reason we introduce the notion of \emph{uniformly non-contradictory program cores}.
\begin{definition}[Uniformly Non-Contradictory Program Core]\label{def:consistent-idb}
  A program core \corepp of an ELP \pp over $\calA$ is \emph{uniformly non-contradictory} if for every valid input \inputpp for \corepp, $\corepp \cup \inputpp$ is not contradictory.
\end{definition}
\pagebreak
\begin{corollary}\label{cor:nontradictory}
  Given a uniformly non-contradictory program core \corepp over \pAtoms, for every $\pp \in \Pi(\corepp)$ the following holds:  either $S \neq \LitA$ for every $S \in AS(\pp)$, or \pp is incoherent.
\end{corollary}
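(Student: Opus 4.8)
The plan is to unfold the definitions and show that the only way the conclusion can fail---namely $\LitA \in AS(\pp)$ for some $\pp \in \Pi(\corepp)$---forces \pp to be contradictory, which contradicts uniform non-contradictoriness. First I would fix an arbitrary $\pp \in \Pi(\corepp)$; by definition of $\Pi(\corepp)$ we have $\pp = \corepp \cup \inputpp$ for some valid input $\inputpp \in I(\corepp)$. Since \corepp is uniformly non-contradictory (Definition~\ref{def:consistent-idb}), \pp is not contradictory. The goal is then to translate ``not contradictory'' into the stated disjunction about answer sets.

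The key step is to recall the classification of inconsistency from the \emph{Consistency} subsection. An ELP \pp is said to be \emph{contradictory} precisely when its inconsistency is caused by classically complementary literals (cases 1, 2, 3a), and \emph{incoherent} otherwise (case 3b). Since \pp is not contradictory, there are exactly two possibilities: either \pp is consistent, or \pp is incoherent. I would treat these two cases separately. In the incoherent case we are immediately done, since the second disjunct of the corollary holds by definition. So the real work lies in the consistent case, where I must show that $S \neq \LitA$ for every $S \in AS(\pp)$.

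For the consistent case I would argue from the basic observations already recorded in the preliminaries about the shape of \aspp. It is stated there that ``\aspp can either consist of only consistent sets of literals, only \LitA, or no sets at all.'' If \pp is consistent it has at least one answer set, so the ``no sets at all'' option is excluded. Moreover, the case where the only answer set is $\LitA$ is, by the very definition in~\citep{Gelfond1991} quoted in the text, one of the forms of \emph{inconsistency}, and in fact it is exactly the situation arising from simultaneous derivation of strongly complementary literals (cases 1, 2, 3a), hence contradictory. Since \pp is not contradictory, $\LitA$ cannot be the sole answer set. Combining this with the trichotomy leaves only the possibility that \aspp consists exclusively of consistent sets of literals, and in particular $S \neq \LitA$ for every $S \in AS(\pp)$, which is the first disjunct.

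The main obstacle I anticipate is making the link between the syntactic notion ``not contradictory'' (defined via the four-case taxonomy of~\citep{Schulz2015}, involving well-founded models and the normal program \ppgen) and the purely answer-set-level statement ``$\LitA$ is not the only answer set.'' One must be careful that ``$\LitA \in AS(\pp)$'' genuinely coincides with the contradictory cases and is never a symptom of mere incoherence; the excerpt's remark that every inconsistent literal set satisfying all rules is replaced by \LitA, together with the characterization that contradictions arise exactly from classically complementary literals while incoherence (case 3b) corresponds to \ppgen having \emph{no} answer sets, should justify this identification. Once that correspondence is pinned down, the remaining reasoning is a routine case split, so I would spend the bulk of the proof on precisely stating why $\LitA$ being an answer set entails contradictoriness rather than incoherence.
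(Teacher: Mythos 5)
Your proof is correct and matches the paper's (implicit) reasoning: the paper states this corollary without proof, treating it as an immediate consequence of Definition~\ref{def:consistent-idb} together with the classification of inconsistency types from~\citep{Schulz2015}, which is exactly the definitional unfolding you carry out. Your care in checking that $\LitA \in AS(\pp)$ can only arise in the contradictory cases (1, 2, 3a) and never in the incoherent case 3b --- where $\ppgen$ has no answer sets and hence $AS(\pp)$ is empty --- is precisely the one non-trivial point, and you handle it correctly.
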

Naturally, given a program core \corepp, every program $\pp \in \Pi(\corepp)$ is uniformly non-contradictory whenever \corepp is.

As stated in \citep{Gelfond1991} and implied by Corollary~\ref{cor:nontradictory}, non-contradictory program cores do not guarantee consistent programs since an ELP that is free of contradictions could potentially still be incoherent.
Detecting incoherence and establishing coherence in an ELP, though, is a separate task \citep[see][]{Schulz2015,Schulz2017} and outside of the scope of this paper.
We, therefore, assume that in the following, any given ELP is coherent if not stated otherwise.

\section{Contradictions}%
\label{sec:contradictions}
The aim of this paper is to present a method that analyzes program cores and computes possible modifications such that the modified program core becomes uniformly non-contradictory.
The presented method will, therefore, ensure that given a program core \corepp, there is no contradictory ELP $\pp \in \Pi(\corepp)$.
To that end, we first \emph{identify} what may cause contradictions.
We will then show how to find the causes (\emph{conflict detection}) and remove them (\emph{conflict resolution}).
In this section, we will assume that any given program core \corepp does not contain constraints.
The influence of constraints with respect to the consistency of the program will be examined in Section~\ref{ssec:conflict_resolution_with_constraints}.

\subsection{Conflicts}%
\label{subsec:conflicts}
To find rules that could potentially lead to contradictions, one has to look at rules with strongly complementary head literals.
We will call two rules with strongly complementary head literals \emph{conflicting} if both rule bodies are simultaneously satisfiable by a consistent set of classical literals.

\begin{definition}[Conflicting Rules, Conflict \citep{Thevapalan2020}]\label{def:conflict}
  Suppose an ELP \pp.
  Two rules ${r,r'\in \corepp}$, $r \neq r'$, are \emph{conflicting} (written as $r \conf r'$) if $\head$ and $\head[r']$ are strongly complementary and there exists a consistent set of classical literals ${\litset \subseteq \LitA}$ such that $\body$ and $\body[r']$ are true in $\litset$.
A \emph{conflict} is a pair $(r,r')$ of rules such that $r,r'$ are conflicting.
  We will denote the set of all conflicts ${(r,r')}$ in an ELP \pp by $\Confl{\pp}$, and correspondingly the set of all conflicts $(r,r')$ involving a rule $r$ will be denoted by \confl{r}.
  Furthermore, we will refer to the set of rules $r' \in \pp$ that are conflicting with $r$ in \pp as \emph{adversarial rules} of $r$, denoted by $\conflr{r}$, \ie, ${\conflr{r} = \{r' \mid r \conf r', r'\in \pp\}}$.
  We will call two rules $r,r'$ \emph{non-conflicting} (symbolically $r \nconf r'$) iff $r \conf r'$ does not hold.
\end{definition}

\paragraph{Remark 1:}
(a) If ${r,r' \subseteq \corepp}$ are conflicting rules, then there exists a consistent set \litset of classical literals such that ${\litset \vDash \body,\body[r']}$.
Consequently, there is a set of facts \dbE  (obtained from \litset in a straighforward way) such that ${\corepp \cup \dbE}$ is contradictory.
(b) The identification of two rules as conflicting is done independently of the rest of the program core and especially independently of any set \dbE of currently given facts.

\begin{table}
  \centering
  \caption{Relationship between two rules $r,r'$ with complementary head literals \wrt a literal ${L=A}$}%
  \label{tab:conf-cases}
  {\tablefont\begin{tabular}{@{\extracolsep{\fill}}rllll}
      \topline
        & $L \in \bodyp{+}$ & $\nlit L \in \bodyp{+}$ & $L \in \bodyp{-}$ & $\nlit L \in \bodyp{-}$
        \midline
      $L \in \bodyp[r']{+}$ & \pc & \nc & \nc & \pc  \\
      $\nlit L \in \bodyp[r']{+}$ & \nc  & \pc & \pc & \nc \\
      $L \in \bodyp[r']{-}$ & \nc  & \pc  & \pc & \pc \\
      $\nlit L \in \bodyp[r']{-}$ & \pc  & \nc & \pc & \pc
      \botline
  \end{tabular}}
\end{table}

\begin{proposition}
  Given an ELP \pp over \pAtoms, its program core \corepp is uniformly non-contradictory if \corepp does not contain any conflicts.
\end{proposition}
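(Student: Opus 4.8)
The plan is to prove the contrapositive: if \corepp is not uniformly non-contradictory, then \corepp contains a conflict. By Definition~\ref{def:consistent-idb}, the assumption means there is a valid input \inputpp for \corepp such that $\pp' = \corepp \cup \inputpp$ is contradictory, and the goal is to exhibit two rules $r, r' \in \corepp$ with $r \conf r'$. I would split the argument into two parts: (i) show that a contradictory $\pp'$ must contain two rules with strongly complementary heads whose bodies are \emph{simultaneously satisfiable by a consistent set} of classical literals; and (ii) show that both of these rules must already lie in \corepp rather than in the input \inputpp.

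For part (i), I would use that a contradiction is, by definition, caused by the simultaneous derivation of a strongly complementary pair $L, \nlit L$, and turn this into the consistent witness demanded by Definition~\ref{def:conflict} by anchoring on the bottom-up evaluation of a reduct. Fix a set $S$ of classical literals whose reduct ${\pp'}^{S}$ has an inconsistent least model (e.g.\ $S$ the literal set induced by an answer set of the normal program \ppgen in case~3a), and iterate the consequence operator of the negation-free program ${\pp'}^{S}$ from $\emptyset$, yielding $X_0 \subseteq X_1 \subseteq \cdots$. Let $n^\ast$ be the least stage at which a complementary pair appears; then $X_{n^\ast-1}$ is consistent, and at stage $n^\ast$ both $L$ and $\nlit L$ are produced by rules $r, r' \in \pp'$ whose positive bodies lie in $X_{n^\ast-1}$ and whose negative bodies are disjoint from $S$. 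Since $X_{n^\ast-1}$ is contained in the least model, which for this choice of $S$ equals $S$ itself, taking $T = X_{n^\ast-1}$ yields a consistent set with $T \sats \body$ and $T \sats \body[r']$. This witness extraction, uniformly across all contradictory cases (1, 2 and 3a of~\citep{Schulz2015}), is the main obstacle: the fixpoint picture is cleanest in case~3a, whereas for cases~1 and~2 one must obtain the same kind of consistent derivation witness from the (partial) well-founded computation rather than from a total answer set of \ppgen.

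For part (ii), I would invoke the two structural constraints on valid inputs: \inputpp is a \emph{consistent} set of facts whose literals lie in \LitE, i.e.\ their underlying atoms are external (they occur in bodies of \corepp but in no head of \corepp). Since $r$ and $r'$ have strongly complementary heads $L$ and $\nlit L$ with $\atom{L} = \atom{\nlit L}$, they cannot both be facts of \inputpp, as that would place a strongly complementary pair into the consistent set \inputpp; nor can they split between a fact and a complex rule, since if $L$ were a fact literal of \inputpp its atom would be external, contradicting the fact that the complementary-headed complex rule exhibits $\atom{L}$ as a head atom of \corepp. Hence $r, r' \in \corepp$, and because \corepp consists only of complex rules, both are complex rules with strongly complementary heads whose bodies are simultaneously satisfied by the consistent set $T$. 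By Definition~\ref{def:conflict} this gives $r \conf r'$, so \corepp contains a conflict, completing the contrapositive.
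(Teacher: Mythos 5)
Your proposal is correct in outline and follows essentially the same route as the paper's proof: a contrapositive argument whose core is a three-way case analysis of how the complementary pair $L, \nlit L$ can be simultaneously derived --- both as facts of \inputpp, one as a fact of \inputpp and one via a rule head of \corepp, or both via rule heads of \corepp --- where the first case is excluded by the consistency required of valid inputs, the second by the restriction of inputs to external atoms (which never occur in heads of \corepp), leaving the rule--rule case, which is precisely a conflict. Your part (ii) is line-for-line the paper's cases (a) and (b).

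Where you genuinely diverge is part (i), and there you are \emph{more} careful than the paper, not less. The paper's proof passes directly from ``$\pp = \corepp \cup \inputpp$ is contradictory'' to two rules whose bodies are ``satisfied in \pp'' and then invokes the conflict notion, without ever extracting the \emph{consistent} witness set that Definition~\ref{def:conflict} demands --- a nontrivial step, since a contradictory program's only answer set may be the inconsistent \LitA, so satisfaction ``in \pp'' does not immediately yield a consistent witness. Your fixpoint-stage argument (iterate the consequence operator of the reduct, stop at the least stage $n^{\ast}$ at which a complementary pair appears, take $T = X_{n^{\ast}-1}$) discharges exactly this step in case 3a, and your handling of a literal derived at an earlier stage, or contributed as an input fact, is correct via monotonicity of the stages together with your part (ii). The residual obstacle you flag for cases 1 and 2 is real by your own standard of rigor: there \pp has no answer sets at all, so the paper's phrase ``simultaneously derivable'' is strictly vacuous under its own definition ($L \in \bigcup AS(\pp)$), and both you and the paper are implicitly leaning on the characterization from \citep{Schulz2015} that contradictoriness in those cases is still caused by deriving a complementary pair along the (partial) well-founded computation; extracting the consistent witness from the monotone stages of that computation, as you sketch, would close the argument. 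In short: same decomposition as the paper, but you make explicit --- and in the main case actually prove --- a consistency-of-witness step that the paper leaves implicit, at the cost of an honestly acknowledged loose end in the two no-answer-set cases where the paper simply waves its hands.
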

\begin{proof}
  Assume an ELP \pp whose core \corepp is not uniformly non-contradictory.
  Then, by Definition~\ref{def:consistent-idb}, there exists a valid input \inputpp such that $\pp=\corepp \cup \inputpp$ is contradictory.
  This implies that there exist two complementary literals $L,\nlit L$ that are simultaneously derivable in \pp.
  $L,\nlit L$ are only simultaneously derivable if either
  (a) $\{L.,\nlit L.\} \subseteq \inputpp$, or
  (b) $L. \in \inputpp$ and there exists a rule $r \in \corepp$ with $\head=\nlit L$ such that $\body$ is satisfied in \pp, or
  (c) there exist two rules $r,r'\in \corepp$ with $\head=L,\head[r']=\nlit L$ such that $\body,\body[r']$ are satisfied in \pp.
  Regarding (a): \inputpp cannot contain fact literals $L,\nlit L$ because \inputpp has to be consistent by definition.
  Regarding (b): \inputpp does only contain facts over external atoms.
  Since, by definition, external atoms do not occur in rule heads of \corepp, $L.$ cannot be in \inputpp whenever there is a rule in \corepp with head literal $\nlit L$.
  Therefore, the only way how $L,\nlit L$ could be derived simultaneously is if (c) holds, \ie, if there exist two rules $r,r'$ in \corepp with complementary rule heads that are satisfiable in \pp.
  Consequently, if \corepp is not uniformly non-contradictory, then \corepp contains at least one conflict.
\end{proof}

In nonmonotonic logics, the possible occurence of contradictions is often dealt with by restricting the syntax of knowledge representation languages.
As mentioned before, restricting the set of possible facts in \inputpp to facts over external atoms of the respective program core limits the expressibility in programs.
But this limitation assures that contradictions via a rule in \corepp and a fact in \inputpp cannot arise even if \corepp is conflict-free since those kinds of contradictions are not caused by conflicts between complex rules in \corepp.
Therefore, in the following, \inputpp will only consist of fact literals over \extAtoms.

\subsection{Conflict Detection}%
\label{subsec:conflict_detection}
We will now analyze the properties of (non-)conflicting rules.
For two rules with complementary heads to be conflicting, both their bodies have to be satisfiable by at least one consistent set of literals.
Table~\ref{tab:conf-cases} shows the different combinations in which an atom ${A \in \pAtoms}$ can appear as a classical literal $L$ in an ELP \pp over \pAtoms in the bodies of two rules $r,r'$ with complementary rule heads.
For each case, the table states whether both rule bodies can hold simultaneously and consequently, whether $L$ makes the two rules explicitly non-conflicting, or a conflict might be possible.
It is easy to see that $r,r'$ are non-conflicting whenever the corresponding extended literals in $\body,\body[r']$ that are based on $A$ are strongly or default complementary.
These observations lead to the following conclusion:

\begin{theorem}\label{thm:confprops}
  Let \pp be a program with rules ${r,r'\in \corepp}$.
  Two rules ${r,r'}$ are conflicting if and only if
  \begin{description}
    \item[(CP1)] $\head,\head[r']$ are strongly complementary, and
    \item[(CP2)] ${\bodyp[r_1]{+} \cap \bodyp[r_2]{-} = \emptyset}$ such that $r_1,r_2 \in \{r,r'\}, r_1 \neq r_2$, and
    \item[(CP3)] ${\bodyp{+} \cup \bodyp[r']{+}}$ is consistent.
  \end{description}
\end{theorem}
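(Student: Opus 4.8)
The plan is to note that (CP1) is literally the first clause of Definition~\ref{def:conflict}, so the substantive content of the theorem is the claim that, under (CP1), the two bodies \body and \body[r'] are simultaneously satisfiable by a consistent set of classical literals if and only if (CP2) and (CP3) hold. I would therefore unfold ``conflicting'' into its definition and reduce the theorem to proving the equivalence
\[
  \bigl(\exists\ \text{consistent } S \subseteq \LitA:\ S \vDash \body \text{ and } S \vDash \body[r']\bigr)
  \iff \text{(CP2)} \wedge \text{(CP3)}.
\]

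For the forward direction I would assume such a consistent $S$ is given. Since $S \vDash \body$ and $S \vDash \body[r']$ force $\bodyp{+} \cup \bodyp[r']{+} \subseteq S$, consistency of $S$ is inherited by the subset, yielding (CP3) at once. For (CP2) I would argue by contradiction: if some $L \in \bodyp{+} \cap \bodyp[r']{-}$, then $S \vDash \body$ requires $L \in S$ whereas $S \vDash \body[r']$ requires $\nottt L$ true in $S$, \ie $L \notin S$ --- a contradiction. The symmetric case $L \in \bodyp[r']{+} \cap \bodyp{-}$ is identical, so both instances of (CP2) hold.

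For the backward direction I would exhibit the explicit witness $S \coloneqq \bodyp{+} \cup \bodyp[r']{+}$ and verify it is consistent and satisfies both bodies. Consistency is exactly (CP3). The positive body literals lie in $S$ by construction, so the only thing to check is that every default-negated body literal is respected, \ie that $L \notin S$ for each $L \in \bodyp{-} \cup \bodyp[r']{-}$. Here the standing assumption that every rule is applicable does the decisive work: applicability of $r$ supplies a consistent set satisfying \body, which forces $\bodyp{+} \cap \bodyp{-} = \emptyset$ (and likewise $\bodyp[r']{+} \cap \bodyp[r']{-} = \emptyset$ for $r'$). Hence for $L \in \bodyp{-}$ we get $L \notin \bodyp{+}$ from applicability and $L \notin \bodyp[r']{+}$ from (CP2), so $L \notin S$; the argument for $L \in \bodyp[r']{-}$ is symmetric. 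Thus $S \vDash \body$ and $S \vDash \body[r']$, and the two rules are conflicting.

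The step I expect to be the main obstacle is precisely this last verification of the negative parts. It is tempting to believe that (CP2) and (CP3) alone guarantee the witness works, but (CP2) only excludes the cross-rule overlaps $\bodyp{+} \cap \bodyp[r']{-}$ and $\bodyp[r']{+} \cap \bodyp{-}$; one genuinely needs the applicability hypothesis to rule out the intra-rule overlap $\bodyp{+} \cap \bodyp{-}$ that would otherwise let a literal be simultaneously demanded and forbidden by the \emph{same} rule. A more mechanical alternative would be to run the atom-by-atom case analysis encoded in Table~\ref{tab:conf-cases}, checking each of the four rows against each column, but the single-witness construction $S = \bodyp{+} \cup \bodyp[r']{+}$ is cleaner and avoids that bookkeeping.
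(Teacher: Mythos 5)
Your proof is correct and takes essentially the same route as the paper's: the identical witness $\litset = \bodyp{+} \cup \bodyp[r']{+}$, with (CP3) supplying consistency, (CP2) excluding the cross-rule overlaps, and the standing applicability assumption excluding the intra-rule overlaps $\bodyp{+} \cap \bodyp{-}$ and $\bodyp[r']{+} \cap \bodyp[r']{-}$ (the paper invokes exactly this as ``our assumption'' when decomposing $\litset \cap (\bodyp{-} \cup \bodyp[r']{-})$ into four intersections). Your observation that (CP2) and (CP3) alone do not suffice and that applicability does the decisive work for the negative parts is precisely the role it plays in the paper's argument.
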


\begin{proof}
  Suppose an ELP \pp over \pAtoms, and rules ${r,r'}$ in \corepp that satisfy (CP1), (CP2), and (CP3).
  For rules $r,r'$ to be conflicting according to Definition~\ref{def:conflict}, their head literals have to be strongly complementary, \ie, ${L = \head}$ and ${\nlit L = \head[r']}$.
  This is satisfied via (CP1).
  Definition~\ref{def:conflict} also requires that for $r,r'$ to be conflicting, there has to exist a consistent set $\litset \subseteq \LitA$ of classical literals for which the bodies $\body, \body[r']$ are true.
  We show that ${\litset = \bodyp{+} \cup \bodyp[r']{+}}$ is such a set.
  By (CP3), \litset is consistent.
  In order for \litset to satisfy both rule bodies \body, \body[r'] simultaneously, $\litset$ has to have the following properties by definition: (a) $\litset \cap (\bodyp{-} \cup \bodyp[r']{-}) = \emptyset$ and (b) $\bodyp{+} \cup \bodyp[r']{+} \subseteq \litset$.
  (b) is trivially fulfilled.
  Regarding (a), we determine
  \begin{align}
    \litset \cap (\bodyp{-} \cup \bodyp[r']{-}) &= (\bodyp{+} \cup \bodyp[r']{+}) \cap (\bodyp{-} \cup \bodyp[r']{-}) \nonumber\\
                                                &\begin{multlined}[b]= (\bodyp{+} \cap \bodyp{-}) \cup (\bodyp{+} \cap \bodyp[r']{-}) \\
                                                \cup (\bodyp[r']{+} \cap \bodyp{-}) \cup (\bodyp[r']{+} \cap \bodyp[r']{-})\end{multlined}\label{eqn:proof-cp-3}
  \end{align}
  Due to our assumption (see Remark 1), the first and the last intersection in (\ref{eqn:proof-cp-3}) are empty, and due to (CP2), also the middle ones in (\ref{eqn:proof-cp-3}) are empty.
  Altogether, we have $\litset \cap (\bodyp{-} \cup \bodyp[r']{-}) = \emptyset$ which yields (a).
  So, $(r,r')$ is a conflict in \pp.

  Conversely, let \pp be an ELP with conflicting rules $r,r' \in \corepp$.
  (CP1) holds, since by definition, the head literals of $r$ and $r'$ are strongly complementary.
  Furthermore, there exists a consistent set \litset of classical literals such that $\body$ and $\body[r']$ are true in \litset.
  Since \litset cannot satisfy two classical literals $L$ and $\nottt L$ simultaneously, (CP2) holds.
  As \litset is consistent, its subset $\bodyp{+} \cup \bodyp[r']{+}$ is also consistent.
  Therefore, (CP3) holds.
\end{proof}

Given (CP2) and (CP3) in Theorem~\ref{thm:confprops} and Table~\ref{tab:conf-cases}, we can see that $r,r'$ are not conflicting whenever $\body[r]$ contains a literal $L$ such that $\body[r']$ contains a literal that is either strongly or default-complementary to $L$.

\begin{definition}[Conflict-Preventing Literals]\label{def:conflict-preventing-lits}
  Given an ELP \pp and two rules ${r,r' \in \corepp}$ with complementary heads, two extended literals ${\LitExt \in \body}$, ${\LitExtPrime \in \body[r']}$ are \emph{conflict-preventing} if \LitExt and \LitExtPrime are atom-related and complementary.
\end{definition}

This leads us to the following observation:

\begin{proposition}\label{prop:conf-preventing}
  Let $\pp$ be an ELP with two rules $r,r' \in \pp$ with complementary heads.
  The rules $r,r'$ are non-conflicting iff there exist two extended literals $\LitExt \in \body,\LitExtPrime \in \body[r]$ such that \LitExt,\LitExtPrime are conflict-preventing.
\end{proposition}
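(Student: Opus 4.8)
The plan is to reduce the claim to the characterization already established in Theorem~\ref{thm:confprops}. Since $r$ and $r'$ are assumed to have complementary heads, condition (CP1) holds automatically, so by Theorem~\ref{thm:confprops} the pair $r,r'$ is conflicting if and only if both (CP2) and (CP3) hold. Contrapositively, $r \nconf r'$ holds exactly when (CP2) fails or (CP3) fails. It therefore suffices to show that the existence of a conflict-preventing pair $\LitExt \in \body[r]$, $\LitExtPrime \in \body[r']$ (the intended reading of the second membership) is equivalent to the disjunction $\neg(\text{CP2}) \lor \neg(\text{CP3})$.

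To establish this equivalence I would split according to the two ways in which $\LitExt,\LitExtPrime$ can be complementary by Definition~\ref{def:ext-lit-combo-types}: strongly complementary or default complementary. The key bookkeeping observation is that an extended literal occurring in a body is classical precisely when it lies in the positive part $\bodyp{+}$, and is default-negated precisely when it is of the form $\nottt L$ with $L \in \bodyp{-}$. A strongly complementary pair thus consists of two classical literals, so it sits in $\bodyp[r]{+}$ and $\bodyp[r']{+}$ and has the form $\LitExt,\nlit{\LitExt}$; this is exactly an inconsistency in $\bodyp[r]{+} \cup \bodyp[r']{+}$, i.e. the failure of (CP3). A default complementary pair has the form $L,\nottt L$ with one member classical and the other default-negated; placing the classical one in $\bodyp[r]{+}$ and reading the default-negated one as $L \in \bodyp[r']{-}$ (or symmetrically) puts $L$ into $\bodyp[r]{+} \cap \bodyp[r']{-}$ (or into $\bodyp[r']{+} \cap \bodyp[r]{-}$), i.e. the failure of (CP2). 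Running each of these equivalences in both directions yields the claim.

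The one point that needs care — and the step I expect to be the main obstacle — is the forward direction in the strongly complementary case. From $\neg(\text{CP3})$ I only know that $\bodyp[r]{+} \cup \bodyp[r']{+}$ is inconsistent, i.e. contains some $L$ together with $\nlit L$; to produce a conflict-preventing pair with one literal in $\body[r]$ and the other in $\body[r']$ I must rule out that $L$ and $\nlit L$ both come from the same body. This is where the standing assumption that every rule is applicable is essential: applicability of $r$ (resp. $r'$) forces $\bodyp[r]{+}$ (resp. $\bodyp[r']{+}$) to be consistent on its own, so the complementary pair must straddle the two bodies, and I may take $\LitExt = L \in \bodyp[r]{+} \subseteq \body[r]$ and $\LitExtPrime = \nlit L \in \bodyp[r']{+} \subseteq \body[r']$ (or the symmetric choice). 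The failure of (CP2) similarly yields, after possibly swapping the roles of $r$ and $r'$, a literal $L \in \bodyp[r]{+}$ together with $\nottt L \in \body[r']$, a default complementary and hence conflict-preventing pair. The converse directions follow immediately from the same bookkeeping and require no appeal to applicability.
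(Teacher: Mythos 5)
Your proof is correct and follows essentially the same route as the paper's: reduce to Theorem~\ref{thm:confprops} via the automatic satisfaction of (CP1), then match strongly complementary pairs with the failure of (CP3) and default complementary pairs with the failure of (CP2). The only difference is that you explicitly invoke the standing applicability assumption to rule out a complementary pair lying within a single body --- a detail the paper leaves implicit in its ``as shown before'' appeal to the proof of Theorem~\ref{thm:confprops} --- and you correctly read the statement's $\LitExtPrime \in \body[r]$ as the intended $\LitExtPrime \in \body[r']$.
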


\begin{proof}
  Two conflict-preventing literals ${\LitExt \in \body}$, ${\LitExtPrime \in \body[r']}$ are either (a) default-complementary or (b) strongly complementary.
  In case of (a), the rules $r,r'$ will not satisfy (CP2) in Definition~\ref{def:conflict}, and in case of (b), (CP3) is not met.
  On the other hand, if two rules  $r,r'$ with complementary head literals are non-conflicting, then by Definition~\ref{def:conflict} either condition (CP2) or (CP3) or both are not met.
  As shown before, (CP2) (\resp (CP3)) can only be not satisfied by $r,r'$ if their bodies contain default-complementary (\resp strongly complementary) literals \LitExt,\LitExtPrime.
  In both cases, \LitExt,\LitExtPrime are conflict-preventing literals.
\end{proof}

\begin{example}\label{ex:contradictions}
  Consider the ELP $\pp_{\mref{ex:contradictions}}$:
  \begin{align*}
    \rowprefix{r_1} & x \when a, \nottt b.\\
    \rowprefix{r_2} & \nlit x \when b, \nottt c. \\
    \rowprefix{r_3} & \nlit x \when \nlit a, \nottt b. \\
    \rowprefix{r_4} & \nlit x \when a, \nlit b. \\
    \rowprefix{r_5} & \nlit x \when c, d.
  \end{align*}
  $\pp_{\mref{ex:contradictions}}$ has two conflicts: $(r_1,r_4)$ and  $(r_1,r_5)$.
  The rules $r_1, r_2$ violate (CP2) and have the conflict-preventing literals $b,\nottt b$.
  The rules $r_1, r_3$ violate (CP3) and have the conflict-preventing literals $a,\nlit a$.
\end{example}

\subsection{Conflict Resolution}%
\label{subsec:conflict_resolution}
Next, we will show how to resolve conflicts in an ELP \pp.
To resolve all conflicts in \pp, one or both rules of each conflict in $\Confl{\pp}$ have to be modified such that \corepp becomes uniformly non-contradictory.
We will call the modification of \pp to $\pp'$ a \emph{conflict resolution step} whenever at least one conflict is resolved and a sequence of conflict resolution steps $\langle \ppi{1}, \ppi{2}, \dots, \ppi{n} \rangle$ will be called \emph{conflict resolution process}.
We present an approach to compute possible solutions to resolve a conflict in \pp such that the following properties hold:
\begin{itemize}
  \item[(P1)]
    The conflict resolution process is \emph{successful}.
    By that, we mean that a conflict resolution process where in each step a computed resolution option is applied will eventually lead to a non-contradictory program core, \ie, a finite sequence of conflict resolution steps $\langle \ppi{1}, \dots, \ppi{n} \rangle$  such that $\Confl{\ppi{n}} = \emptyset$.
  \item[(P2)]
    Each conflict resolution step is \emph{minimally invasive} as it only consists of extending the body of a conflicting rule.
    This means, for every rule $r \in \corepp$ with $\confl{r} \neq \emptyset$, the corresponding modified rule $r' \in \corepp'$ satisfies that $\head[r'] = \head$ and $\body[r'] \supseteq \body[r]$, and for every other rule $r$, it holds that $r'=r$.
  \item[(P3)]
    Each rule that is modified during the resolution process remains \emph{applicable}.
\end{itemize}

Given an ELP \pp over a set of atoms \pAtoms and a rule $r_i \in \corepp$ with a non-empty set $\confl{r_i}$, we want to modify $r_i$ to a rule $r_i'$ such that every conflict in $\confl{r_i}$ is resolved, \ie, ${\confl{r'_i} = \emptyset}$.
For that, our approach analyzes how the extended literals \LitxA of \pp jointly appear in the rules of $\conflr{r_i}$, and computes \emph{conflict-resolving extensions} for $r_i$.

\begin{definition}\label{def:conflict-res-ext}
  Given an ELP \pp over \pAtoms and a rule $r_i \in \pp$, a \emph{\lambdasol-extension $\lambdasolrule{r_i}$ for $r_i$} is a set of extended literals ${\LitExt \in \LitE}$ such that $\Atom{\lambdasolrule{r_i}} \cap \Atom{\body[r_i]} = \emptyset$.
  A rule $r'_i$ of the form
  \begin{align}\label{eqn:kapminusexample}
    \rowprefix{r'_i} \head[r_i] \when \body[r_i], \lambdasolrule{r_i}.
  \end{align}
  will be called a \emph{\lambdasol-extended rule \wrt $\lambdasolrule{r_i}$}.
\end{definition}

To resolve all conflicts $(r_i,r_j) \in \confl{r_i}$ of a rule $r_i \in \pp$, we want to gather those \lambdasol-extensions $\lambdasolrule{r_i}$ for $r_i$ such that the \lambdasol-extended rule $r'_i$ \wrt $\lambdasolrule{r_i}$ and each $r_j$ are not conflicting, \ie, $\confl{r'_i} = \emptyset$.

\begin{definition}
  Given an ELP \pp over \pAtoms and a rule $r_i \in \pp$ with $\conflr{r_i} \neq \emptyset$, a \lambdasol-extension $\lambdasolrule{r_i}$ for $r_i$ is \emph{conflict-resolving} iff $\conflr{r'_i} = \emptyset$ where $r'_i$ is the \lambdasol-extended rule \wrt $\lambdasolrule{r_i}$.
  We say a conflict-resolving \lambdasol-extension for $r_i$ \emph{resolves all conflicts in $\confl{r_i}$ simultaneously}.
\end{definition}

Hence, in order to obtain a uniformly non-contradictory program core \corepp, we resolve all conflicts in \corepp by extending the bodies of particular conflicting rules by a respective conflict-resolving \lambdasol-extension.

We can show that extending a rule in \pp by a \lambdasol-extension does not lead to additional conflicts.
\begin{proposition}\label{prop:extension-cautious}
  Let $\pp_\tau = \pp \backslash \{r_i\}$ be the set of all rules in \pp other than $r_i$.
  Let furthermore $r'_i$ be the \lambdasol-extended rule \wrt a \lambdasol-extension $\lambdasolrule{r_i}$ and  $\pp' = \pp_\tau \cup \{r'_i\}$ the program where $r_i$ is replaced by $r'_i$.
  Then, $\Confl{\pp'} \subseteq \Confl{\pp}$.
\end{proposition}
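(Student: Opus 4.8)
The plan is to show that adding literals to the body of a single rule $r_i$ can only \emph{destroy} existing conflicts, never create new ones. The key intuition is monotonicity: extending a rule body makes it harder to satisfy, so any conflict involving the extended rule must already have been present before the extension, and all conflicts not involving $r_i$ are untouched.

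First I would take an arbitrary conflict $(s,t) \in \Confl{\pp'}$ and argue it lies in $\Confl{\pp}$. I would split into two cases according to whether the extended rule $r'_i$ is one of the two rules in the conflict. If neither $s$ nor $t$ equals $r'_i$, then both $s,t \in \pp_\tau \subseteq \pp$, so $(s,t)$ is a conflict between two rules that are unchanged from \pp; since conflict is a property of a pair of rules in isolation (Remark 1(b)), $(s,t) \in \Confl{\pp}$ directly.

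The main case is when $r'_i$ participates, say the conflict is $(r'_i, r_j)$ for some $r_j \in \pp_\tau$. Here I would invoke Theorem~\ref{thm:confprops}: since $(r'_i, r_j)$ is a conflict, we have (CP1) $\head[r'_i], \head[r_j]$ strongly complementary; (CP2) the positive/negative body parts do not clash; and (CP3) $\bodyp[r'_i]{+} \cup \bodyp[r_j]{+}$ is consistent. I would then use the relations $\head[r'_i] = \head[r_i]$ and $\body[r'_i] \supseteq \body[r_i]$ (which give $\bodyp[r'_i]{+} \supseteq \bodyp[r_i]{+}$ and $\bodyp[r'_i]{-} \supseteq \bodyp[r_i]{-}$) to transfer each condition back to the pair $(r_i, r_j)$. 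Condition (CP1) is immediate from equal heads. For (CP3), consistency of the larger set $\bodyp[r'_i]{+} \cup \bodyp[r_j]{+}$ forces consistency of its subset $\bodyp[r_i]{+} \cup \bodyp[r_j]{+}$. For (CP2), if the smaller bodies clashed, i.e. some literal were in $\bodyp[r_i]{+} \cap \bodyp[r_j]{-}$ (or symmetrically), then since $\bodyp[r_i]{+} \subseteq \bodyp[r'_i]{+}$ this clash would persist for $r'_i$, contradicting that $(r'_i,r_j)$ satisfies (CP2); hence $(r_i, r_j)$ also satisfies (CP2). By Theorem~\ref{thm:confprops}, $(r_i, r_j) \in \Confl{\pp}$.

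The step I expect to require the most care is making the monotonicity argument fully rigorous in the direction that matters: the conditions (CP2) and (CP3) are \emph{negative/upward-closed} constraints (``no clash'', ``consistent''), so they propagate from the \emph{larger} body $r'_i$ \emph{down} to the smaller body $r_i$, which is exactly the direction we need. One subtlety worth stating explicitly is the form of the pair in $\Confl{\pp}$: the conflict I recover is $(r_i, r_j)$ rather than $(r'_i, r_j)$, so I should note that since $\Confl{\pp}$ is defined over the rules of \pp and $r_i \in \pp$, $r_j \in \pp$, this pair genuinely lives in $\Confl{\pp}$. Combining both cases, every conflict of $\pp'$ is a conflict of $\pp$, giving $\Confl{\pp'} \subseteq \Confl{\pp}$.
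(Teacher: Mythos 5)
Your proof is correct and takes essentially the same route as the paper's: both rest on the monotonicity observation that $\body[r_i] \subseteq \body[r'_i]$ (with $\head[r'_i] = \head[r_i]$) makes the no-conflict conditions propagate from the extended rule down to the original one, so any conflict of $r'_i$ with some $r_j$ was already a conflict of $r_i$ with $r_j$. The only cosmetic difference is that you verify (CP1)--(CP3) of Theorem~\ref{thm:confprops} directly, whereas the paper invokes Proposition~\ref{prop:conf-preventing} (absence of conflict-preventing literals), which is just the same conditions repackaged.
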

\begin{proof}
  Suppose a rule $r_j \in \pp$ such that $r'_i \conf r_j$ holds.
  Then by Proposition~\ref{prop:conf-preventing}, $\body[r_j]$ and $\body[r'_i]$ do not contain conflict-preventing literals.
  Since $\body[r_i] \subseteq \body[r'_i]$, this implies that $\body[r_j]$ and $\body[r_i]$ do not contain conflict-preventing literals either and, therefore, $r_j \conf r_i$ holds.
  Consequently, if a \lambdasol-extended rule $r'_i$ \wrt a \lambdasol-extension $\lambdasolrule{r_i}$ is in conflict with another rule $r_j \in \pp$, then $r_i$ and $r_j$ are already conflicting.
\end{proof}
Note that Proposition~\ref{prop:extension-cautious} holds for general \lambdasol-extensions and not exclusively for conflict-resolving ones.

Extending $\body[r_i]$ with a subset-minimal \lambdasol-extension $\lambdasolrule{r_i}$, therefore, constitutes a \emph{cautious change} in \pp.
As a consequence, a uniformly non-contradictory program core $\corepp$ is obtained by applying changes to \corepp that are \emph{justified} in the technical and logical sense.

\subsection{A Na\"ive Approach: Semi-Normal Completion}%
\label{ssec:semi-normal-completion}
Preventing the derivation of contradictions is a well-known problem.
In default logic for example, \emph{normal default theories} in \citep{Reiter1980} provide a solution to this problem by restricting the form of defaults.
In normal default theory, every default has to be of the form $\frac{\alpha:\beta}{\beta}$, meaning that $\beta$ can only be concluded if $\alpha$ is explicitly true and $\beta$ can be assumed to be true.
This guarantees that the default theory has at least one extension.
This idea was adapted for rules of logic programs by Caminada in \citep{Caminada2006,CaminadaSakama2006}.
In \citep{Caminada2006}, \emph{semi-normal defeasible rules} are introduced. These rules have the following form:
\begin{align}
  \rowprefix{r}  L_0 \when L_1, \dots, L_m, \nottt L_{m+1}, \dots, \nottt L_n, \nottt \nlit L_0.
\end{align}

Obviously, if a program core \corepp only consists of semi-normal defeasible rules, \corepp is also uniformly non-contradictory.
The transformation of a program core \corepp to a uniformly non-contradictory core $SN(\corepp)$ using semi-normal defeasible rules can, therefore, be defined in a straightforward way.
The \emph{semi-normal completion} $SN(\corepp)$ of \corepp arises from the extension of the body of each rule $r \in \corepp$ by the literal that is dual to the head literal of $r$.

\begin{definition}\label{def:semi-normal-completion-program}
  Given a rule $\rowprefix{r} L \when \body.$, the \emph{semi-normal completion} $sn(r)$ of $r$ is a rule of the following form:
  \[\rowprefix{sn(r)} L \when \body, \nottt \nlit L.\]
\end{definition}
\begin{definition}\label{def:semi-normal-completion-program}
  Given a program core \corepp, the \emph{semi-normal completion} $SN(\corepp)$ of \corepp is the program core $SN(\corepp) = \{sn(r) \mid r \in \corepp \}$.
\end{definition}

In case there is a conflict in the initial program core \corepp, this conflict is prevented by the additional literals in the rule bodies of the conflicting rules which are dual to the respective head literal.

It is easy to see that when using a semi-normal completed program core \corepp over \pAtoms, the restriction that a valid input \inputpp for \corepp can only consist of literals over \extAtoms becomes void.
Instead, it would allow the input \inputpp for \corepp to be over \pAtoms.
However, this type of \emph{automated conflict-prevention} does not explicitly resolve the underlying problems, \ie, the actual contradictions that are modelled by the rules of the initial \corepp.

\begin{example}\label{ex:sn-wrong-as}
  Consider the following program core:
  \begin{align*}
    \progprefix{SN(\pp_{\calC,\ref{ex:sn-wrong-as}})}
    \rowprefix{r_1} &\nlit{\mathit{allergicToPeanuts}} \when \nottt \mathit{allergicEvent}. \\
    \rowprefix{r_2} &\mathit{allergicToPeanuts} \when \mathit{testedPositivePA}. \\
    \rowprefix{r_3} &\mathit{canEatPeanuts} \when \nlit{\mathit{allergicToPeanuts}}.
  \end{align*}
  $\pp_{\calC,\ref{ex:sn-wrong-as}}$ states that if someone never had an allergic event, they are not allergic to peanuts.
  If someone is tested positive for a peanut allergy, they are indeed allergic to it.
  If someone is explicitly not allergic to peanuts, they are allowed to eat peanuts.
  The semi-normal completion of $\pp_{\ref{ex:sn-wrong-as},\calC}$ yields:
  \begin{align*}
    \progprefix{SN(\pp_{\calC,\ref{ex:sn-wrong-as}})}
    \rowprefix{sn(r_1)} &\nlit{\mathit{allergicToPeanuts}} \when \nottt \mathit{allergicEvent}, \nottt \mathit{allergicToPeanuts}. \\
    \rowprefix{sn(r_2)} &\mathit{allergicToPeanuts} \when \mathit{testedPositivePA}, \nottt \nlit{\mathit{allergicToPeanuts}}. \\
    \rowprefix{sn(r_3)} &\mathit{canEatPeanuts} \when \nlit{\mathit{allergicToPeanuts}}, \nottt \nlit{\mathit{canEatPeanuts}}.
  \end{align*}
  Assume the input $\inputpp = \{\mathit{testedPositivePA}.\}$ for $SN(\pp_{\calC,\ref{ex:sn-wrong-as}})$.
  For $\inputpp \cup SN(\pp_{\calC,\ref{ex:sn-wrong-as}})$, we get the answer sets
  \begin{align*}
    S_1 &= \{\mathit{testedPositivePA},\mathit{allergicToPeanuts}\}\text{, and}\\
    S_2 &= \{\mathit{testedPositivePA},\nlit{\mathit{allergicToPeanuts}}, \mathit{canEatPeanuts}\}.
  \end{align*}
  Clearly, $S_2$ shows that $\pp_{\calC,\ref{ex:sn-wrong-as}}$ contains wrong information or lacks crucial information.
  For example, in this case, $r_1$ only looks at allergic events without considering the possibility that one never consumed a peanut and therefore never had a reaction to begin with.
\end{example}

Semi-normal completions of conflicting rules rather ``bypass'' potential contradictions as the complementary statements are just considered separately instead of at the same time, which, of course, leads to contradictions.
This, in turn, means that given a program core \corepp with conflicting rules, solving programs $\pp \in \Pi(SN(\corepp))$ with the semi-normal completed program core will in fact yield answer sets that are not adequate in the professional sense as at least one of the answer sets will represent unintended conclusions as we have seen in Example~\ref{ex:sn-wrong-as}.
From a functional perspective, such an effect is not desirable whatsoever.
Semi-normal completions can, therefore, be in fact regarded as quickfixes in order to assure consistent answer sets regardless of what the resulting answer sets actually represent.
Example~\ref{ex:sn-wrong-as} illustrates that semi-normal program cores do not suffice if one wants to establish a knowledge base that represents adequate information and their relations in the professional sense.
To incrementally \emph{repair} the knowledge base, one has to analyze conflicting rules and explicitly add, remove, or modify rules in order to get rid of all conflicts.

\subsection{Conflict Resolution With Informative Extensions}%
\label{subsec:conflict_resolution_with_informative_extensions}
In the following, we present an approach to obtain uniformly non-contradictory program cores by computing conflict-resolving \lambdasol-extensions for rules $r_i$.
Since \lambdasol-extensions only comprise literals whose atoms occur in the bodies of the adversaries $r_j \in \conflr{r_i}$ and that do not contain literals that are atom-related to $\head[r_i]$,
the extensions are \emph{informative} in the sense that they utilize the body literals of the rules that are involved in the conflicts of $r_i$ without introducing new atoms or using the head literal.
Semi-normal completions are, in comparison, a purely technical device to prevent contradictions and thus not informative.

According to Theorem~\ref{thm:confprops}, for every rule ${r_j \in \conflr{r_i}}$, $\lambdasolrule{r_i}$ has to contain at least one conflict-preventing literal, \ie, an extended literal $\LitExt$ such that there exists an extended literal ${\LitExtPrime \in \body[r_j]}$ and $\LitExt,\LitExtPrime$ are complementary.
This means that a conflict-resolving \lambdasol-extension for $r_i$ can be obtained by finding a set \blanket of literals that contains at least one body literal of each rule $r_j \in \conflr{r_i}$ and that does not contain any literals that occur in the body of $r_i$.
This set \blanket can then be transformed into a conflict-resolving \lambdasol-extension $\lambdasolrule{r_i}$ by negating each literal in this set.
It is easy to see that in order for the \lambdasol-extended rule $r'_i$ to remain applicable, each computed conflict-resolving \lambdasol-extension and therefore each such \blanket has to meet additional constraints which we will explore next.

In order to compute these sets \blanket, we will first adapt the notion of \emph{hitting sets} \citep{Berge1989} as follows:
Let ${\advbodies{r_i} = \{\body[r_j] \mid r_j \in \conflr{r_i}\}}$ be the collection of all rule bodies in \corepp whose rules are in conflict with ${r_i}$.
A \emph{hitting set} of $\advbodies{r_i}$ is a subset ${h \subseteq \LitxA}$ that meets every set in $\advbodies{r_i}$, \ie, ${\body[r_j] \cap h \neq \emptyset}$ for every ${\body[r_j] \in \advbodies{r_i}}$.
A hitting set $h$ is \emph{minimal} if there does not exist a proper subset of $h$ that is also a hitting set.
It is easy to see that minimal hitting sets of $\advbodies{r_i}$ provide a good starting point to attain suitable conflict-resolving extensions $\lambdasolrule{r_i}$ since every hitting set shares at least one extended literal with the body of every rule ${r_j \in \conflr{r_i}}$.
One could assume that to compute a set of conflict-preventing literals, it suffices to just take any minimal hitting set of $\advbodies{r_i}$ and simply default- or strongly negate each extended literal of that hitting set.
But as mentioned before, not every hitting set can be used to compute conflict-resolving extensions as there are some more conditions that have to be met.
We have to ensure that a \lambdasol-extended rule $r'_i$ of $r_i$ is still applicable.
The applicability is not given if (a) the conflict-resolving extension itself is already inconsistent or (b) the extended rule body $\body[r'_i]$ becomes inconsistent.
Case (a) applies if a hitting set $h$ contains complementary or reconcilable literals $\LitExt,\LitExtPrime$ as every combination of their strongly and default negated complement is also complementary\footnote{Keep in mind that default literals cannot be strongly negated, \eg, $\nottt \nlit L, \nottt K$ are not negated counterparts for two atom-related literals $\nottt L, \nottt \nlit K$.}.
To ensure that (b) does not apply, we only consider those hitting sets that do not comprise literals that are atom-related to literals in $\body[r_i]$.
Hence, hitting sets that contain complementary or reconcilable literals and hitting sets that share common underlying atoms with $\body[r_i]$ will not be used to compute conflict-resolving \lambdasol-extensions.

\paragraph{Remark 2:}
The restriction emerging from (b) is a bit stricter than necessary as rules with atom-related literals in their bodies are still satisfiable if these atom-related literals are compatible.
But in order to avoid unnecessary technicalities, we will omit this special case here.

All these restrictions lead us to the following extension of hitting sets called \emph{blankets} which we will use to compute proper conflict-resolving \lambdasol-extensions:
\begin{definition}[Blanket]\label{def:blanket}
  A \emph{blanket} \blanket for $\conflr{r_i}$ is a non-empty, $\subseteq$-minimal, consistent set of extended literals ${\LitExt \in \LitxA}$ without reconcilable literals such that ${\Atom{\blanket} \cap \Atom{\body[r_i]} = \emptyset}$ and for each ${r_j \in \conflr{r_i}}$, there exists an extended literal ${\LitExt \in \blanket}$ with ${\LitExt \in \body[r_j]}$.
  We denote the set of all blankets for $\conflr{r_i}$ by $\blanketop{\conflr{r_i}}$.
\end{definition}

\begin{example}\label{ex:prop-no-blanket}
  Suppose the following rules:
  \begin{align*}
    \rowprefix{r_1} a &\when b,c,d. \\
    \rowprefix{r_2} \nlit a &\when b, c. \\
    \rowprefix{r_3} \nlit a &\when b, \nottt e. \\
    \rowprefix{r_4} \nlit a &\when b, \nottt \nlit e. \\
    \rowprefix{r_5} \nlit a &\when b, e.
  \end{align*}
  Consider the following ELP:\@ $\pp_{\ref{ex:prop-no-blanket}} = \{r_1,r_2\}$.
  For $\conflr{r_1}$ in $\pp_{\ref{ex:prop-no-blanket}}$, there does not exist a blanket since every body literal of $r_2$ also occurs in the body of $r_1$.

  Now, consider the ELP $\pp'_{\ref{ex:prop-no-blanket}} = \{r_1,r_3,r_4\}$.
  The only body literals of $r_3,r_4$ that do not occur in $r_1$ are $\nottt e,\nottt \nlit e$.
  Since a blanket for $\conflr{r_1}$ in $\pp'_{\ref{ex:prop-no-blanket}}$ must not contain reconcilable literals, there does not exist a blanket for $\conflr{r_1}$.

  Likewise, for the ELP $\pp''_{\ref{ex:prop-no-blanket}} = \{r_1,r_3,r_5\}$, we observe the following:
  The only body literals of $r_3,r_5$ that do not occur in $r_1$ are $\nottt e, e$.
  Since a blanket for $\conflr{r_1}$ in $\pp''_{\ref{ex:prop-no-blanket}}$ must not contain complementary literals, there does not exist a blanket for $\conflr{r_1}$.
\end{example}

\begin{example}\label{ex:run}
  Suppose an ELP $\pp_{\ref{ex:run}}$ with the following rules:
  \begin{align*}
    \rowprefix{r_1} \exEligX &\when \exCondAAdv. \\
    \rowprefix{r_2} \nlit \exEligX &\when \nottt \exHighLCount, \nottt \exPreTreatedN. \\
    \rowprefix{r_3} \nlit \exEligX &\when \nlit \exHighLCount, \nottt \exCiR. \\
    \rowprefix{r_4} \nlit \exEligX &\when \exHighLCount, \nottt \exPreTreatedM. \\
  \end{align*}
  Program $\pp_{\ref{ex:run}}$ describes the following scenario:
  Suppose a therapy $X$ that was developed for patients with condition $A$.
  Currently only patients with an advanced form of condition $A$ are eligible ($r_1$).
  Furthermore, imagine a specific laboratory value $L$ in a patient's body that can only be determined via an invasive test which is why such a test can only be done once every 6 months at most.
  Recently completed long-term studies now indicate that there are some exceptions where $X$ must not be recommended to a patient with advanced $A$.
  The therapy should not be used on a patient if it is unknown whether they currently have a high or low $L$-count and if it is also unknown if they were treated with drug $N$ sometime in the past ($r_2$).
  If the usage of a substance $R$ is not contraindicated and the patient has a low $L$-count, treatment $X$ must also not be recommended ($r_3$).
  Finally, if patient has a high $L$-count, they should not receive treatment $X$ if it cannot be concluded that the patient was treated with drug $M$ in the past ($r_4$).

  Rule $r_1$ is in conflict with every other rule:
  \begin{align*}
    \confl{r_1} &= \{(r_1, r_2),(r_1,r_3),(r_1,r_4)\} \\
    \conflr{r_1} &= \{r_2, r_3, r_4\}
  \end{align*}
  The literals of the rule bodies in $\conflr{r_1}$ do not share any common underlying atoms with the literals in the body of $r_1$.
  The set $\blanketop{\conflr{r_1}}$ of all blankets for $\conflr{r_1}$ consists of the sets $\blanket_1, \blanket_2, \blanket_3, \blanket_4, \blanket_5$ where
  \begin{align*}
    \blanket_1 &= \{\nottt \exHighLCount, \nlit \exHighLCount, \nottt \exPreTreatedM\}, \\
    \blanket_2 &= \{\nottt \exHighLCount, \nottt \exCiR, \nottt \exPreTreatedM\}, \\
    \blanket_3 &= \{\nlit \exHighLCount, \nottt \exPreTreatedN, \nottt \exPreTreatedM\}, \\
    \blanket_4 &= \{\nottt \exPreTreatedN, \nottt \exCiR, \nottt \exPreTreatedM\}, \text{~and} \\
    \blanket_5 &= \{\exHighLCount, \nottt \exPreTreatedN, \nottt \exCiR\}.
  \end{align*}
\end{example}

For the remainder of this paper, we will assume that for any rule $r_i$ with $\conflr{r_i} \neq \emptyset$, there exists at least one blanket for $\conflr{r_i}$.
This particularly implies that
for any given conflict pair $(r_i,r_j)$, $\Atom{\body[r_j]} - \Atom{\body[r_i]} \neq \emptyset$ has to hold.
We will assume that $r_i$ will be chosen such that this condition is satisfied.
If this is not possible in practice, one or both rules have to be modified more individually, \eg, by introducing new atoms to \pp.
Other solutions will be discussed in Section~\ref{subsec:missing_lambdasol_extension}.
Now, we will describe how to compute all possible conflict-resolving \lambdasol-extensions $\lambdasolrule{r_i}$ given a blanket \blanket for $\conflr{r_i}$.

In order to use a blanket ${\blanket \in \blanketop{\conflr{r_i}}}$ to resolve all conflicts $(r_i,r_j)$ in $\confl{r_i}$, the literals in \blanket have to be negated such that for each conflict pair, the addition of the negated literals to the body of $r_i$ leads to the violation of either property (CP2) or (CP3) in Theorem~\ref{thm:confprops}.
Due to the fact that default and strong negation cannot be used interchangeably as default literals cannot be strongly negated, we propose the following \negLxOp-operator, which for an extended literal \LitExt with ${\atom{\LitExt} = A}$, outputs the set of literals that are complementary to \LitExt:
\begin{equation}\label{eqn:neglit}
  \negLx{\LitExt} =
  \begin{cases}
    \{\nottt A, \nlit A\} & \text{if $\LitExt = A$}\\
    \{\nottt\nlit A, A\} & \text{if $\LitExt = \nlit A$}\\
    \{A\} & \text{if $\LitExt = \nottt A$}\\
    \{\nlit A\} & \text{if $\LitExt = \nottt\nlit A$}\\
  \end{cases}
\end{equation}

Therefore, the \negLxOp-operator can be used to compute the proper conflict-preventing literals.
In fact, we want to extend the \negLxOp-operator such that the application of the resulting operator on a blanket for $r_i$ will yield the set of all corresponding conflict-resolving \lambdasol-extensions $\lambdasolrule{r_i}$.
For this purpose, we have to ensure that the negations of all literals in a blanket yield extended rules of the form (\ref{eqn:kapminusexample}) that are still applicable.
Consequently a blanket must not contain complementary literals.
Dual literals however are allowed in a blanket since their negation via the \negLxOp-operator leads to non-complementary literals.

Additionally, since we want minimal sets of conflict-resolving literals, given a blanket with dual literals $\LitExt,\LitExtPrime$, we only want to consider those negated forms that both $\LitExt,\LitExtPrime$ have in common.
Therefore, we extend our \negLxOp-operator such that
the negation of atom-related literals only yields the negated form that both literals have in common, \ie,

\begin{align}\label{eqn:neg-lit-dep}
  \negLxdep{\LitExt}{X} = \cap \{\negLx{K^*} \mid \atom{K^*} = \atom{L^*}, K^* \in X\},
\end{align}
where $X$ is a set of extended literals.
Applied on a blanket and their literals, the resulting literals uphold a crucial property:
\begin{corollary}\label{cor:neg-dep-id-disj}
  Given a blanket \blanket, (a) for all atom-related literals ${\LitExt,\LitExtPrime \in \blanket}$, $\negLxdep{\LitExt}{\blanket} = \negLxdep{\LitExtPrime}{\blanket}$, and (b) for all other literals $\LitExt,\LitExtPrime$, $\negLxdep{\LitExt}{\blanket} \cap \negLxdep{\LitExtPrime}{\blanket} = \emptyset$.
\end{corollary}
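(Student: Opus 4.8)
The plan is to reduce both parts to unwinding the definitions of $\negLxOp$ in~(\ref{eqn:neglit}) and of $\negLxdep{\cdot}{\cdot}$ in~(\ref{eqn:neg-lit-dep}), after first pinning down which atom-related literals a blanket can contain. By Definition~\ref{def:blanket}, \blanket is consistent and contains no reconcilable literals; by Definition~\ref{def:ext-lit-combo-types} two distinct atom-related extended literals are either complementary or compatible, where compatible means reconcilable or dual. Consistency excludes complementary pairs and the blanket condition excludes reconcilable pairs, so any two distinct atom-related literals in \blanket are \emph{dual}. In particular a blanket contains at most two literals over a given atom $A$: either a single one, or one of the dual pairs $\{A,\nottt\nlit A\}$ or $\{\nlit A,\nottt A\}$, since a third literal over $A$ would create a complementary or reconcilable pair.

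For part (a), I would argue directly from~(\ref{eqn:neg-lit-dep}). If $\LitExt,\LitExtPrime\in\blanket$ are atom-related then $\atom{\LitExt}=\atom{\LitExtPrime}$, so the index sets $\{K^*\in\blanket \mid \atom{K^*}=\atom{\LitExt}\}$ and $\{K^*\in\blanket \mid \atom{K^*}=\atom{\LitExtPrime}\}$ coincide; hence $\negLxdep{\LitExt}{\blanket}$ and $\negLxdep{\LitExtPrime}{\blanket}$ are literally the same intersection and are therefore equal. To confirm that this common value is a well-defined non-empty set I would evaluate~(\ref{eqn:neglit}) on the two admissible dual configurations: $\negLx{A}\cap\negLx{\nottt\nlit A}=\{\nottt A,\nlit A\}\cap\{\nlit A\}=\{\nlit A\}$ and $\negLx{\nlit A}\cap\negLx{\nottt A}=\{\nottt\nlit A,A\}\cap\{A\}=\{A\}$, so the operator collapses a dual pair to the single negated form the two literals share (and equals $\negLx{\LitExt}$ when only one literal over the atom occurs).

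For part (b), I would use that $\negLxOp$ preserves the underlying atom: inspecting the four cases of~(\ref{eqn:neglit}) shows every literal in $\negLx{K^*}$ has underlying atom $\atom{K^*}$. Consequently $\negLxdep{\LitExt}{\blanket}$, being an intersection of sets $\negLx{K^*}$ with $\atom{K^*}=\atom{\LitExt}$, contains only literals over $\atom{\LitExt}$, and likewise $\negLxdep{\LitExtPrime}{\blanket}$ contains only literals over $\atom{\LitExtPrime}$. When $\LitExt,\LitExtPrime$ are not atom-related, $\atom{\LitExt}\neq\atom{\LitExtPrime}$, so the two sets are drawn from disjoint pools of literals and their intersection is empty.

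The only real obstacle is the bookkeeping in the first step, namely being precise about which atom-related pairs a blanket may legally contain and checking that no third literal over a common atom can appear. Once that structural fact is recorded, part (a) is immediate from equality of index sets and part (b) from the atom-preservation property of $\negLxOp$, with the dual-pair computations serving only to witness non-emptiness.
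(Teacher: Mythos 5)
Your proof is correct and matches the intended justification: the paper states this corollary without an explicit proof, treating it as immediate from (\ref{eqn:neglit}) and (\ref{eqn:neg-lit-dep}), which is precisely what you unwind — part (a) because $\negLxdep{\LitExt}{\blanket}$ depends only on $\atom{\LitExt}$ (so atom-related literals index the identical intersection), and part (b) because every literal in $\negLx{K^*}$ has underlying atom $\atom{K^*}$, making the two sets disjoint whenever the atoms differ. Your preliminary bookkeeping — that any two distinct atom-related literals in a blanket must be dual, so at most two literals per atom occur — is sound and is in fact the same structural fact the paper itself establishes later inside the proof of Lemma~\ref{lem:neg-lit-blanket}, where it is used to show non-emptiness of $\negLxdep{\LitExt}{\blanket}$.
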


\begin{example}\label{ex:neg-common-1}
  Suppose the following sets
  $X_1 = \{a, \nlit b, \nottt b\}$,
  $X_2 = \{a, b, \nlit b\}$, and
  $X_3 = \{a, b, \nottt b\}$.
  The negation of these sets via the \negLxOp-operator yields
  \begin{align*}
    \negLxdep{a}{X_1} = \negLxdep{a}{X_2} = \negLxdep{a}{X_3} &= \{\nlit a, \nottt a\}, \\
    \negLxdep{\nlit b}{X_1} = \negLxdep{\nottt b}{X_1} &= \{b, \nottt\nlit b\} \cap \{b\} = \{b\}, \\
    \negLxdep{b}{X_2} = \negLxdep{\nlit b}{X_2} &= \{\nottt b, \nlit b\} \cap \{\nottt \nlit b, b\} = \emptyset, \text{~and}\\
    \negLxdep{b}{X_3} = \negLxdep{\nottt b}{X_3} &= \{\nlit b, \nottt b \} \cap \{b\} = \emptyset.
  \end{align*}
  The negation of dual literals in $X_1$ leads to the output of the common literal $b$ whereas complementary literals like in $X_2$ and $X_3$ do not share common negated forms.
\end{example}

Example~\ref{ex:neg-common-1} illustrates why blankets are allowed to contain dual literals since they share a common negated form, but must not have complementary literals.
When looking at the negation of $X_2$ in Example~\ref{ex:neg-common-1}, one can see that strongly negated literals still have negated forms that are not complementary, \viz, the reconcilable literals $\nottt b, \nottt \nlit b$.
In contrast to complementary literals, reconcilable literals in a rule body will not inhibit the satisfiability of the rule.
The examples, therefore, demonstrate that by requesting common negated forms, our approach is prima facie stricter than necessary.
But in order to keep this approach simple while guaranteeing that conflict-resolving \lambdasol-extensions maintain the satisfiability of the extended rule, we will only allow combinations of atom-related literals whose negations are guaranteed to be non-complementary.

\begin{lemma}\label{lem:neg-lit-blanket}
  Let an ELP \pp and a rule ${r \in \pp}$ be given.
  For each extended literal \LitExt in a blanket \blanket for \conflr{r}, \negLxdep{\LitExt}{\blanket} is a non-empty set of extended literals which are complementary to \LitExt.
\end{lemma}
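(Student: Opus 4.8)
The plan is to separate the statement into its two assertions---that every element of $\negLxdep{\LitExt}{\blanket}$ is complementary to $\LitExt$, and that $\negLxdep{\LitExt}{\blanket}$ is non-empty---and to settle the first quickly before resolving the second by a short case analysis built on the defining properties of a blanket.

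For the complementarity claim, I would first observe that $\LitExt$ itself lies in $\blanket$ and is trivially atom-related to itself, so the set $\negLx{\LitExt}$ appears among the sets intersected in the definition~(\ref{eqn:neg-lit-dep}); hence $\negLxdep{\LitExt}{\blanket} \subseteq \negLx{\LitExt}$. It then suffices to inspect the four cases of~(\ref{eqn:neglit}) and note that in each case every member of $\negLx{\LitExt}$ is atom-related to $\LitExt$, distinct from $\LitExt$, and either strongly or default complementary to it (for instance, for $\LitExt = A$ the outputs $\nottt A$ and $\nlit A$ are exactly the default- and strongly-complementary partners of $A$). Since the intersection is contained in $\negLx{\LitExt}$, each of its elements inherits complementarity with $\LitExt$.

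For non-emptiness, the key is to pin down precisely which atom-related literals can co-occur in $\blanket$. A blanket is consistent and contains no reconcilable literals, so among the four extended literals $A$, $\nlit A$, $\nottt A$, $\nottt \nlit A$ over a fixed atom it may contain neither a strongly-complementary pair ($A,\nlit A$), nor a default-complementary pair ($A,\nottt A$ or $\nlit A,\nottt \nlit A$), nor the reconcilable pair ($\nottt A,\nottt \nlit A$). Excluding all of these, the only atom-related pairs that survive are the two dual pairs $\{A,\nottt \nlit A\}$ and $\{\nlit A,\nottt A\}$, and no third literal can be adjoined to either without recreating a forbidden pair. Hence the literals of $\blanket$ sharing a fixed atom form either a singleton or a single dual pair. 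In the singleton case $\negLxdep{\LitExt}{\blanket} = \negLx{\LitExt}$, which is non-empty by~(\ref{eqn:neglit}); in the dual-pair case a direct computation gives $\negLx{A} \cap \negLx{\nottt \nlit A} = \{\nlit A\}$ and $\negLx{\nlit A} \cap \negLx{\nottt A} = \{A\}$, both non-empty. This is exactly the common-negation behaviour already displayed in Example~\ref{ex:neg-common-1}.

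I expect the non-emptiness half to be the substantive part, and it is resolved entirely by the enumeration above: everything turns on the fact that a blanket tolerates only dual atom-related pairs, and that a dual pair possesses a common negated form, whereas a complementary or reconcilable pair---which a blanket explicitly forbids---would intersect to the empty set. Once the admissible configurations are fixed, both the singleton and dual-pair computations are immediate.
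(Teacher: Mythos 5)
Your proof is correct and takes essentially the same route as the paper's: both arguments reduce non-emptiness to the observation that the blanket's ban on complementary and reconcilable literals leaves, among atom-related literals, only singletons or a single dual pair, and that a dual pair shares a common negated form under~(\ref{eqn:neglit}). Your explicit complementarity step via $\negLxdep{\LitExt}{\blanket} \subseteq \negLx{\LitExt}$ (since $\LitExt$ is atom-related to itself) merely spells out what the paper leaves implicit, so there is no substantive difference.
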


\begin{proof}
  Since the \negLxOp-operator as defined in (\ref{eqn:neglit}) returns a non-empty set for any extended literal \LitExt, \negLxdep{\LitExt}{\blanket} can only be an empty set if \blanket contains some distinct atom-related literals $\LitExtPrime_1,\dots,\LitExtPrime_m$ such that $\negLx{\LitExtPrime_1} \cap \dots \cap \negLx{\LitExtPrime_m} = \emptyset$.
  Suppose atom-related literals $\LitExtPrime_1, \LitExtPrime_2, \LitExtPrime_3$.
  By (\ref{eqn:neglit}), $\negLx{\LitExtPrime_1} \cap \negLx{\LitExtPrime_2} \neq \emptyset$ iff $\LitExtPrime_1,\LitExtPrime_2$ are dual.
  In all other cases, $\negLx{\LitExtPrime_1} \cap \negLx{\LitExtPrime_2} = \emptyset$.
  Let therefore $\LitExtPrime_1, \LitExtPrime_2$ be dual literals.
  For any $\LitExtPrime_3$, it is easy to see that by (\ref{eqn:neglit}) either $\negLx{\LitExtPrime_1} \cap \negLx{\LitExtPrime_3} = \emptyset$ or $\negLx{\LitExtPrime_2} \cap \negLx{\LitExtPrime_3} = \emptyset$
  holds.
  This is due to the fact that in every set of at least three different atom-related literals, there are at least two literals that are not dual.
  Consequently, $\negLxdep{\LitExt}{\blanket} = \emptyset$ if \blanket contains more than two atom-related literals or if there exist two atom-related literals $\LitExtPrime_1,\LitExtPrime_2 \in \blanket$ such that $\LitExtPrime_1,\LitExtPrime_2$ are not dual.
  A blanket does not contain complementary or reconcilable literals by definition and only allows atom-related literals that are dual.
  This implies that for each extended literal $\LitExtPrime_1$ in \blanket, there exists at most one other extended literal $\LitExtPrime_2$ in \blanket such that $\LitExtPrime_1,\LitExtPrime_2$ are atom-related, and all atom-related literals in \blanket are dual.
  Thus, for each extended literal ${\LitExt \in \blanket}$, \negLxdep{\LitExt}{\blanket} is not empty.
  More precisely, \negLxdep{\LitExt}{\blanket} always contains either $\nlit L$ or $\nottt L$ or both.
  If \blanket does not contain literals that are atom-related to \LitExt, then $\negLxdep{\LitExt}{\blanket}=\negLx{\LitExt}$ holds.
\end{proof}

For a set $X$ of extended literals \LitExt, we define $\NegLx{X}$ as the set of all possible sets $T$ where $T$ contains every literal of $X$ in a negated form, \ie,
\begin{align}\label{eqn:neg-for-set}
  \begin{aligned}
    \NegLx{X} = \{T \mid T \subseteq \Lxall{X},~&T~\text{contains exactly one element of}\\
                                                 &\text{each non-empty set } \negLxdep{\LitExt}{X}, \LitExt \in X\},
  \end{aligned}
\end{align}
where $\Lxall{X} = \bigcup\limits_{\LitExt \in X} \negLxdep{\LitExt}{X}$.
Due to Lemma~\ref{lem:neg-lit-blanket}, there always exists at least one such set $T$.

\begin{table}
  \centering
  \caption{Results from Example~\ref{ex:negset}}%
  \label{tbl:ex-neg-results}
  {\tablefont\begin{tabular}{p{.04\textwidth}p{.48\textwidth}p{.48\textwidth}}
      \topline
      $\blanket_i$& $\negLxdep{\LitExt}{\blanket_i}$& $\NegLx{\blanket_i}$
      \midline
      $\blanket_1$ & $\{\exHighLCount\},\{\exPreTreatedM\}$ & $\big\{\{\exHighLCount, \exPreTreatedM\}\big\}$ \\
      $\blanket_2$ & $\{\exHighLCount\},\{\exCiR\}, \newline ~~~\{\exPreTreatedM\}$ & $\big\{\{\exHighLCount, \exCiR, \exPreTreatedM\}\big\}$ \\
      $\blanket_3$ & $\{\exHighLCount, \nottt \nlit \exHighLCount\},\{\exPreTreatedN\}, \newline~~~\{\exPreTreatedM\}$ & $\big\{\{\exHighLCount, \exPreTreatedN, \exPreTreatedM\}, \newline~~~ \{\nottt \nlit \exHighLCount, \exPreTreatedN, \exPreTreatedM\}\big\}$ \\
      $\blanket_4$ & $\{\exPreTreatedN\},\{\exCiR\}, \newline~~~ \{\exPreTreatedM\}$ & $\big\{\{\exPreTreatedN, \exCiR, \newline~~~ \exPreTreatedM\}\big\}$ \\
      $\blanket_5$ & $\{\nlit \exHighLCount,\nottt \exHighLCount\},\{\exPreTreatedN\}, \newline~~~ \{\exCiR\}$ & $\big\{\{\nlit \exHighLCount, \exPreTreatedN, \exCiR\}, \newline~~~\{\nottt \exHighLCount, \exPreTreatedN, \exCiR\}\big\}$
      \botline
  \end{tabular}}
\end{table}

\begin{example}[Example~\ref{ex:neg-common-1} continued]
    Continuing Example~\ref{ex:neg-common-1}, we get $\NegLx{X_2}=\NegLx{X_3} = \big\{\{\nlit a\},\{\nottt a\}\big\}$ since $\negLxdep{a}{X_2}=\negLxdep{a}{X_3}=\{\nlit a, \nottt a\}$, and the negation of all other literals yields an empty set.
    For $X_1$, however, we get $\NegLx{X_1}=\big\{\{\nlit a,b\},\{\nottt a,b\}\big\}$ because as for $X_2,X_3$, the negation of $a$ produces two different possible negations and additionally $\nlit b$ and $\nottt b$ have the common negated form $b$.
\end{example}
\begin{example}[Example~\ref{ex:run} continued]%
  \label{ex:negset}
  Continuing Example~\ref{ex:run}, the negation of each blanket in $\blanketop{\conflr{r_1}}$ for rule $r_1$ in $\pp_{\ref{ex:run}}$ yields sets $\blanket_1,\dots,\blanket_5$ as shown in Table~\ref{tbl:ex-neg-results}.
  Blankets $\blanket_2$ and $\blanket_4$ only contain default literals and therefore, each of them has only one possible negation form.
  For $\blanket_3$ and $\blanket_5$, the literals $\exHighLCount$, and $\nlit \exHighLCount$, each have two possible negations which is why $\NegLx{\blanket_3}$ and $\NegLx{\blanket_5}$ contain two sets respectively.
  Regarding $\blanket_1$, we observe that the dual literals $\nlit \exHighLCount,\nottt \exHighLCount$, have the common negated form $\exHighLCount$, and the default literal $\nottt \exPreTreatedM$ has only one negation form.
  Together, that results in the fact that $\NegLx{\blanket_1}$ contains only the one set $\{\exHighLCount,\exPreTreatedM\}$.
\end{example}

\begin{lemma}\label{lem:neg-set-blanket}
  Let an ELP \pp, a rule ${r \in \pp}$ and a blanket \blanket for \conflr{r} be given.
  Furthermore, let $\lambdasol \in \NegLx{\blanket}$.
  For each extended literal ${\LitExt \in \blanket}$, $\lambdasol$ contains exactly one extended literal \LitExtPrime such that $\LitExt,\LitExtPrime$ are complementary.
\end{lemma}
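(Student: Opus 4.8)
The plan is to fix an arbitrary extended literal $\LitExt \in \blanket$ and to prove both halves of ``exactly one'' separately: first the existence of a complementary partner in $\lambdasol$, then its uniqueness. For existence I would simply combine Lemma~\ref{lem:neg-lit-blanket} with the definition of $\NegLx{\blanket}$ in~(\ref{eqn:neg-for-set}). Lemma~\ref{lem:neg-lit-blanket} guarantees that $\negLxdep{\LitExt}{\blanket}$ is non-empty and that every one of its elements is complementary to $\LitExt$; the definition of $\NegLx{\blanket}$ then forces $\lambdasol$ to contain exactly one element $\LitExtPrime$ of this set, and that $\LitExtPrime$ is the desired complementary partner.

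The real content is uniqueness, so I would suppose that $\lambdasol$ contained a second literal $\LitExtPrimeHat$ complementary to $\LitExt$ and derive $\LitExtPrimeHat = \LitExtPrime$. Two observations drive this. First, by Definition~\ref{def:ext-lit-combo-types} complementarity presupposes atom-relatedness, and $\negLx{\cdot}$ was introduced precisely as the operator returning all literals complementary to its argument; hence $\LitExtPrimeHat \in \negLx{\LitExt}$ and $\atom{\LitExtPrimeHat} = \atom{\LitExt}$. Second, every element of $\lambdasol$ is the element selected from some $\negLxdep{M^*}{\blanket}$ with $M^* \in \blanket$, and by Lemma~\ref{lem:neg-lit-blanket} all elements of $\negLxdep{M^*}{\blanket}$ are atom-related to $M^*$; applied to $\LitExtPrimeHat$ this forces the source literal $M^*$ to satisfy $\atom{M^*} = \atom{\LitExt}$, \ie $M^*$ is atom-related to $\LitExt$.

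The crux, and the step I expect to demand the most care, is to show that all such atom-related sources collapse to a single selection. By Corollary~\ref{cor:neg-dep-id-disj}(a), any $M^* \in \blanket$ atom-related to $\LitExt$ yields $\negLxdep{M^*}{\blanket} = \negLxdep{\LitExt}{\blanket}$, whereas part~(b) makes the sets attached to distinct atoms disjoint. Consequently $\negLx{\LitExt}$ can meet $\lambdasol$ only inside the single set $\negLxdep{\LitExt}{\blanket}$, and since $\lambdasol$ takes exactly one element from that set we obtain $\LitExtPrimeHat = \LitExtPrime$. The subtle point to get right is the reading of ``exactly one element of each non-empty set $\negLxdep{\LitExt}{\blanket}$'' in~(\ref{eqn:neg-for-set}) when several dual literals of $\blanket$ induce the very same set: Corollary~\ref{cor:neg-dep-id-disj} is exactly what certifies that these coincide as one set, so that the selection is genuinely one literal per underlying atom rather than one per blanket literal. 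Running the argument for every $\LitExt \in \blanket$ then completes the proof.
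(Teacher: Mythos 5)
Your proof is correct and follows essentially the same route as the paper's: existence of a complementary partner comes from Lemma~\ref{lem:neg-lit-blanket} together with the definition in~(\ref{eqn:neg-for-set}), and uniqueness from Corollary~\ref{cor:neg-dep-id-disj}. The only difference is one of detail: you spell out the uniqueness step (tracing a second candidate $\LitExtPrimeHat$ back to its source set $\negLxdep{M^*}{\blanket}$, forcing atom-relatedness of $M^*$ to $\LitExt$, and collapsing via part~(a) with disjointness from part~(b)), whereas the paper compresses all of this into a single appeal to the corollary.
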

\begin{proof}
  Lemma~\ref{lem:neg-lit-blanket} states that for each ${\LitExt \in \blanket}$, $\negLxdep{\LitExt}{\blanket}$ consists of either one or two extended literals and each of them is complementary to \LitExt.
  Consequently, it follows by (\ref{eqn:neg-for-set}) that each \lambdasol contains a literal \LitExtPrime for each \LitExt in \blanket such that \LitExt,\LitExtPrime are complementary.
  With Corollary~\ref{cor:neg-dep-id-disj}, there exists exactly one such extended literal $\LitExtPrime \in \lambdasol$ for each $\LitExt \in \blanket$.
\end{proof}

Due to the fact that $\blanketop{\conflr{r_i}}$ is a set of sets, we will extend the $\NegLxOp$-operator to also consider sets ${\calX = \{X_1, \dots, X_n\}}$ of sets ${X_1,\dots,X_n}$ of extended literals.
For such a set $\calX$, $\NegLX{\calX}$ shall therefore output the set of all negated form variations in all sets ${X_1,\dots,X_n} \in \calX$, \ie,
\begin{align}\label{eqn:negset}
  \NegLX{\calX} = \{X^- \mid X^- \in \NegLx{X}, X \in \calX\}.
\end{align}
\pagebreak
\begin{lemma}\label{lem:neg-consistent-blanket}
  Let an ELP \pp and a rule ${r \in \pp}$ be given.
  Each ${\lambdasol \in \NegLX{\blanketop{\conflr{r}}}}$ does not contain any complementary literals.
\end{lemma}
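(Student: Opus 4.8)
The plan is to first localize the claim to a single blanket and then run a counting argument on underlying atoms. By the definition (\ref{eqn:negset}) of the extended operator, every $\lambdasol \in \NegLX{\blanketop{\conflr{r}}}$ already lies in $\NegLx{\blanket}$ for some individual blanket $\blanket \in \blanketop{\conflr{r}}$. Hence it suffices to prove, for an arbitrary fixed blanket $\blanket$, that no $\lambdasol \in \NegLx{\blanket}$ contains two complementary literals.

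For the core argument I would recall from Definition~\ref{def:ext-lit-combo-types} that complementary literals are in particular atom-related, so the proof reduces to the stronger statement that $\lambdasol$ contains \emph{at most one} literal per underlying atom; no atom-related pair can then occur, let alone a complementary one. A quick inspection of (\ref{eqn:neglit}) shows that $\negLxOp$ never changes the underlying atom, so every literal of $\lambdasol$ is based on some atom of $\Atom{\blanket}$, and it is enough to control, for each fixed atom $B \in \Atom{\blanket}$, how many literals of $\lambdasol$ are based on $B$.

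The key tool here is Corollary~\ref{cor:neg-dep-id-disj}. Since a blanket contains neither complementary nor reconcilable literals and only admits dual atom-related pairs, the literals of $\blanket$ based on $B$ form either a singleton or a dual pair. Part~(a) of the corollary guarantees that all of them index the \emph{same} set $\negLxdep{\LitExt}{\blanket}$, while part~(b) guarantees that the sets belonging to distinct atoms are pairwise disjoint. The defining condition (\ref{eqn:neg-for-set}) of $\NegLx{\blanket}$ requires $\lambdasol$ to contain exactly one element of each of these (non-empty by Lemma~\ref{lem:neg-lit-blanket}, atom-indexed, pairwise disjoint) sets; therefore $\lambdasol$ contains exactly one literal based on $B$. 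As $B$ was arbitrary, $\lambdasol$ has exactly one literal per atom, so no two of its literals are atom-related, and the claim follows.

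I expect the only delicate point to be the collapsing of a dual pair in $\blanket$ into a single chosen literal in $\lambdasol$: naively one might fear that the two dual literals contribute two (possibly complementary) negations. This is precisely what Corollary~\ref{cor:neg-dep-id-disj}(a) rules out, because the two dual literals index one and the same set in (\ref{eqn:neg-for-set}), so the quantifier ``exactly one element of each non-empty set $\negLxdep{\LitExt}{\blanket}$'' yields a single literal for that atom rather than two. Everything else is bookkeeping, and the argument in fact establishes the full consistency of $\lambdasol$, from which the stated absence of complementary literals is immediate.
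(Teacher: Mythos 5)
Your proof is correct and takes essentially the same route as the paper's: reduce via (\ref{eqn:negset}) to a single blanket, then use Lemma~\ref{lem:neg-lit-blanket} for non-emptiness and Corollary~\ref{cor:neg-dep-id-disj} to control the sets $\negLxdep{\LitExt}{\blanket}$. Your explicit observation that \negLxOp preserves the underlying atom, so that $\lambdasol$ contains exactly one literal per atom of $\Atom{\blanket}$, is in fact slightly more careful than the paper's terse appeal to the corollary (disjointness of the sets alone would not rule out complementary literals across them) and yields the stronger conclusion that $\lambdasol$ contains no atom-related pair at all.
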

\begin{proof}
  Since every \lambdasol consists of a literal $\LitExt \in \negLxdep{\LitExt}{\blanket}$ for every literal $\LitExt \in \blanket$, \lambdasol can only contain complementary literals if there are complementary literals between sets $\negLxdep{\LitExt}{\blanket}$.
  By Corollary~\ref{cor:neg-dep-id-disj}, this is not possible since \blanket itself does not contain complementary or reconcilable literals.
  Consequently, for each ${\lambdasol \in \NegLX{\blanketop{\conflr{r}}}}$, it holds that \lambdasol does not contain complementary literals.
\end{proof}

\begin{example}[Example~\ref{ex:negset} continued]\label{ex:negbag}
  For $\blanketop{\conflr{r_1}}$ in Example~\ref{ex:run}, we get
  \begin{align*}
    \NegLX{\blanketop{\conflr{r_1}}} &= \{\NegLx{\blanket_1}, \NegLx{\blanket_2}, \NegLx{\blanket_3}, \NegLx{\blanket_4}, \NegLx{\blanket_5}\} \\
                                     &= \big\{\{\exHighLCount, \exPreTreatedM\}, \\
                                     &\{\exHighLCount, \exCiR, \exPreTreatedM\}, \\
                                     &\{\exHighLCount, \exPreTreatedN, \exPreTreatedM\}, \\
                                     &\{\nottt \nlit \exHighLCount, \exPreTreatedN, \exPreTreatedM\}, \\
                                     &\{\exPreTreatedN, \exCiR, \exPreTreatedM\}, \\
                                     &\{\nlit \exHighLCount, \exPreTreatedN, \exCiR\}, \\
                                     &\{\nottt \exHighLCount, \exPreTreatedN, \exCiR\}\big\}.
  \end{align*}
\end{example}

Due to the properties of a blanket and their possible negations, it is also guaranteed that any rule extended by such a \lambdasol remains applicable.

\begin{proposition}\label{prop:lambda-extrule-applicable}
  Every $\lambdasol$-extended rule $r'_i$ of the form
  \begin{align}\label{eqn:confresolved-form}
    r'_i \in \{ \head[r_i] \when \body[r_i], \lambdasolrule{r_i}. \mid \lambdasolrule{r_i} \in \NegLX{\blanketop{\conflr{r_i}}}\}
  \end{align}
  of a rule $r_i \in \pp$ is applicable.
\end{proposition}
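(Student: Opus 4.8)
The plan is to peel back the definition of applicability and simply exhibit a consistent set of classical literals that satisfies the whole extended body $\body[r_i] \cup \lambdasolrule{r_i}$. Fix a blanket $\blanket \in \blanketop{\conflr{r_i}}$ with $\lambdasolrule{r_i} \in \NegLx{\blanket}$, and write $\lambda^+$ for the classical (non-default) literals occurring in $\lambdasolrule{r_i}$. The candidate witness I would use is $\litset = \bodyp[r_i]{+} \cup \lambda^+$, i.e.\ the union of the positive parts of the two halves of the body; everything then reduces to checking that this $\litset$ is consistent and that it falsifies every default literal appearing in the extended body.

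Before verifying the candidate I would collect three ingredients. First, since by our standing assumption $r_i$ is applicable, its body is satisfiable, which gives both that $\bodyp[r_i]{+}$ is consistent and that $\bodyp[r_i]{+} \cap \bodyp[r_i]{-} = \emptyset$ (a literal cannot simultaneously be asserted positively and default-negated in a satisfiable body). Second, every literal produced by the \negLxOp-operator in (\ref{eqn:neglit}) carries the same underlying atom as its argument, so $\Atom{\lambdasolrule{r_i}} \subseteq \Atom{\blanket}$; combined with the defining property $\Atom{\blanket} \cap \Atom{\body[r_i]} = \emptyset$ of a blanket, this yields the crucial atom-disjointness $\Atom{\lambdasolrule{r_i}} \cap \Atom{\body[r_i]} = \emptyset$. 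Third, Lemma~\ref{lem:neg-consistent-blanket} guarantees that $\lambdasolrule{r_i}$ contains no complementary literals, so in particular $\lambda^+$ is consistent and $\lambdasolrule{r_i}$ never holds both a classical literal $L$ and its default complement $\nottt L$.

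With these in hand the verification is routine. For consistency of $\litset$: the parts $\bodyp[r_i]{+}$ and $\lambda^+$ are each consistent, and a strongly complementary pair straddling them is impossible since that would force a shared underlying atom, contradicting the atom-disjointness. Satisfaction of the positive literals is immediate. For a default literal $\nottt L$ I would split on its source: if $\nottt L$ comes from $\bodyp[r_i]{-}$, then $L \notin \bodyp[r_i]{+}$ by the disjointness of the positive and negative parts of $\body[r_i]$ and $L \notin \lambda^+$ by atom-disjointness; if $\nottt L$ comes from $\lambdasolrule{r_i}$, then $L \notin \bodyp[r_i]{+}$ again by atom-disjointness, while $L \notin \lambda^+$ because $L$ and $\nottt L$ both lying in $\lambdasolrule{r_i}$ would be a complementary pair, excluded by Lemma~\ref{lem:neg-consistent-blanket}. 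Hence $L \notin \litset$ in every case, so $\litset \vDash \body[r'_i]$ and $r'_i$ is applicable.

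The only genuinely delicate point — and the step I would treat most carefully — is this last case analysis for the default literals, specifically the interaction of a default literal $\nottt L$ inside the $\lambdasol$-extension with a possible classical occurrence of $L$ in the same extension; it is exactly the no-complementary-literals guarantee of Lemma~\ref{lem:neg-consistent-blanket} (via default complementarity) that rules this out. Everything else is bookkeeping driven by the atom-disjointness that the blanket definition supplies.
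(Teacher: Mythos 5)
Your proof is correct, and it rests on exactly the same two ingredients as the paper's proof --- Lemma~\ref{lem:neg-consistent-blanket} for the internal consistency of $\lambdasolrule{r_i}$, and the atom-disjointness $\Atom{\blanket} \cap \Atom{\body[r_i]} = \emptyset$ supplied by Definition~\ref{def:blanket}, together with the standing assumption that $r_i$ itself is applicable --- but its execution is genuinely more constructive. The paper argues by excluding failure modes: it decomposes non-applicability of $r'_i$ into (a) $r_i$ not applicable, (b) $\lambdasolrule{r_i}$ unsatisfiable, (c) $\body[r_i] \cup \lambdasolrule{r_i}$ unsatisfiable, and dismisses each case abstractly, in particular asserting that case (c) ``can only hold if there exist complementary literals between'' the two parts, without ever exhibiting a satisfying set. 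You instead build the explicit witness $\litset = \bodyp[r_i]{+} \cup \lambda^+$ and verify $\litset \vDash \body[r'_i]$ directly --- the same witness technique the paper itself employs in the proof of Theorem~\ref{thm:confprops} with $S = \bodyp{+} \cup \bodyp[r']{+}$. This buys additional rigor at precisely the point you flag as delicate: the coexistence of a default literal $\nottt L$ and a classical occurrence of $L$ inside the extension is ruled out through default complementarity via Lemma~\ref{lem:neg-consistent-blanket}, and your observation that the \negLxOp-operator preserves underlying atoms (hence $\Atom{\lambdasolrule{r_i}} \subseteq \Atom{\blanket}$) makes explicit a step that the paper's case (c) uses only implicitly when it transfers the blanket's atom-disjointness to the extension. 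The paper's version is shorter and reads more smoothly; yours closes the small logical gap between ``no complementary literals anywhere'' and ``jointly satisfiable by a consistent set,'' which is the actual content of applicability.
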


\begin{proof}
  Suppose a \lambdasol-extended rule $r'_i$ \wrt a \lambdasol-extension $\lambdasolrule{r_i}$.
  Rule $r'_i$ is not applicable if
  (a) $r_i$ is not applicable,
  (b) $\lambdasolrule{r_i}$ is not satisfiable, or
  (c) $\body[r_i] \cup \lambdasolrule{r_i}$ is not satisfiable.
  Case (a) is not possible due to our initial assumption that all rules in a given logic program are applicable.
  Case (b) can only hold if $\lambdasolrule{r_i}$ contains complementary literals.
  By Lemma~\ref{lem:neg-consistent-blanket}, this is not possible for any conflict-resolving extension $\lambdasolrule{r_i}$.
  Then, case (c) can only hold if there exist complementary literals between $\body[r_i]$ and $\lambdasolrule{r_i}$.
  By Definition~\ref{def:blanket}, a blanket for $r_i$ cannot contain literals that are atom-related to any literals in $\body[r_i]$.
  Consequently, any conflict-resolving extension $\lambdasolrule{r_i}$ for $r_i$ does also not contain any literals that are atom-related to a literal in $\body[r_i]$.
  Therefore, any $\lambdasol$-extended rule of the form (\ref{eqn:confresolved-form}) is applicable.
\end{proof}

We are now ready to relate conflict-resolving \lambdasol-extensions to blankets.
\pagebreak
\begin{proposition}\label{prop:confresolved}
  Every $\lambdasolrule{r_i} \in \NegLX{\blanketop{\conflr{r_i}}}$ is a conflict-resolving \lambdasol-extension for $r_i$, \ie, for every rule \lambdasol-extended rule $r'_i$ \wrt $\lambdasolrule{r_i}$, it holds that $\confl{r'_i} = \emptyset$ in $\pp' = \pp \backslash \{r_i\} \cup \{r'_i\}$.
\end{proposition}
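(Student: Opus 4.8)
The plan is to reduce the claim to a per-adversary check and then exhibit a conflict-preventing pair for each adversary. Since by Proposition~\ref{prop:extension-cautious} replacing $r_i$ by any $\lambda$-extended rule $r'_i$ satisfies $\Confl{\pp'} \subseteq \Confl{\pp}$, no rule that was not already an adversary of $r_i$ can become an adversary of $r'_i$. Hence it suffices to show $r'_i \nconf r_j$ for every $r_j \in \conflr{r_i}$. First I would fix the blanket $\blanket \in \blanketop{\conflr{r_i}}$ from which $\lambdasolrule{r_i}$ arises, i.e.\ $\lambdasolrule{r_i} \in \NegLx{\blanket}$, together with an arbitrary adversary $r_j \in \conflr{r_i}$.

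The core of the argument is to produce a pair of conflict-preventing literals between $\body[r'_i]$ and $\body[r_j]$ and then invoke Proposition~\ref{prop:conf-preventing}. Its head hypothesis is inherited for free: because $\head[r'_i] = \head[r_i]$ and $r_i \conf r_j$, the heads $\head[r'_i]$ and $\head[r_j]$ are strongly complementary. By the defining hitting property of a blanket, there is an extended literal $\LitExt \in \blanket$ with $\LitExt \in \body[r_j]$. By Lemma~\ref{lem:neg-set-blanket}, the extension $\lambdasolrule{r_i}$ contains exactly one literal $\LitExtPrime$ that is complementary to $\LitExt$; since negation via the $\negLxOp$-operator preserves the underlying atom, $\LitExt$ and $\LitExtPrime$ are atom-related and complementary, hence conflict-preventing in the sense of Definition~\ref{def:conflict-preventing-lits}. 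As $\LitExtPrime \in \lambdasolrule{r_i} \subseteq \body[r'_i]$ while $\LitExt \in \body[r_j]$, Proposition~\ref{prop:conf-preventing} delivers $r'_i \nconf r_j$.

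Since $r_j$ was an arbitrary adversary, this gives $r'_i \nconf r_j$ for all $r_j \in \conflr{r_i}$, and together with the cautious-extension property (no new adversaries) I conclude $\conflr{r'_i} = \emptyset$, that is, $\confl{r'_i} = \emptyset$ in $\pp' = \pp \backslash \{r_i\} \cup \{r'_i\}$. The only delicate point I expect is the bookkeeping that the complementary literal guaranteed inside $\lambdasolrule{r_i}$ is genuinely atom-related to the chosen blanket member $\LitExt \in \body[r_j]$, rather than to some other adversary's body literal; but this is precisely what the "exactly one" clause of Lemma~\ref{lem:neg-set-blanket} together with the disjointness half of Corollary~\ref{cor:neg-dep-id-disj} secures, so no genuinely new estimate is needed and the proof amounts to threading the blanket's hitting property through the negation lemmas.
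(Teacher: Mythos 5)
Your proof is correct and follows essentially the same route as the paper's: both hinge on the blanket's hitting property together with Lemma~\ref{lem:neg-set-blanket} to produce, for each adversary $r_j \in \conflr{r_i}$, a literal in $\lambdasolrule{r_i}$ complementary to some literal of $\body[r_j]$. The only cosmetic differences are that you invoke Proposition~\ref{prop:conf-preventing} where the paper redoes the (CP2)/(CP3) case analysis of Theorem~\ref{thm:confprops} by contradiction, and that you make the appeal to Proposition~\ref{prop:extension-cautious} explicit in order to rule out new adversaries, a reduction the paper leaves implicit.
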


\begin{proof}
  Assume, by way of contradiction, that there exists a rule $r_j \in \conflr{r_i}$ such that the \lambdasol-extended rule $r'_i$ \wrt $\lambdasolrule{r_i}$ and $r_j$ are conflicting, \ie, $r'_i \conf r_j$.
  Then by Theorem~\ref{thm:confprops}, (CP2) and (CP3) hold, \ie, $\bodyp[r'_i]{+} \cap \bodyp[r_j]{-} = \bodyp[r_j]{+} \cap \bodyp[r'_i]{-} = \emptyset$, and
  $\bodyp[r'_i]{+} \cup \bodyp[r_j]{-}$ and $\bodyp[r'_i]{-} \cup \bodyp[r_j]{+}$ are consistent.
  Let \blanket be a blanket on which \lambdasolrule{r_i} is based on, \ie, $\blanket \in \blanketop{\conflr{r_i}}$ and $\lambdasolrule{r_i} \in \NegLx{\blanketop{\conflr{r_i}}}$.
  According to (\ref{eqn:negset}), this also means that $\lambdasolrule{r_i} \in \NegLx{\blanket}$ holds.
  By Definition~\ref{def:blanket}, there exists an extended literal $\LitExt \in \blanket$ such that $\LitExt \in \body[r_j]$.
  If $L \in \bodyp[r_j]{+}$, then by Lemma~\ref{lem:neg-set-blanket} either $\nlit L \in \lambdasolrule{r_i}$ or $\nottt L \in \lambdasolrule{r_i}$.
  This consequently means that if $L \in \bodyp[r_j]{+}$, then either $\nlit L \in \bodyp[r'_i]{+}$, which means condition (CP3) cannot hold or $L \in \bodyp[r'_i]{-}$, which in turn means that (CP2) cannot hold.
  On the other hand, if $L \in \bodyp[r_j]{-}$, then by Lemma~\ref{lem:neg-set-blanket} and (\ref{eqn:kapminusexample}) $L \in \bodyp[r'_i]{+}$, and thus (CP2) cannot hold.
  In any case, $\lambdasolrule{r_i}$ adds an extended literal to the body of $r_i$ such that (CP2) or (CP3) cannot hold for rules $r'_i,r_j$.
  Therefore, $r'_i$ and $r_j$ cannot be conflicting.
\end{proof}

\begin{example}[Example~\ref{ex:negbag} continued]\label{ex:run-solution}
  All conflicts in $\Confl{\pp_{\ref{ex:run}}}$ of Example~\ref{ex:run} can be resolved by replacing $r_1$ with a \lambdasol-extended rule
  \begin{align*}
    \rowprefix{r'_1} \exEligX \when \exCondAAdv, \lambdasolrule{r_1}.
  \end{align*}
  \wrt $\lambdasolrule{r_1}$,
  where
  \begin{align*}
    \lambdasolrule{r_1} \in \NegLX{\blanketop{\conflr{r_1}}} = \big\{&\{\exHighLCount, \exPreTreatedM\}, \\
                                                                     &\{\exHighLCount,\exCiR,\exPreTreatedM\}, \\
                                                                     &\{\exHighLCount,\exPreTreatedN,\exPreTreatedM\}, \\
                                                                     &\{\nottt \nlit \exHighLCount, \exPreTreatedN, \exPreTreatedM\}, \\
                                                                     &\{\exPreTreatedN, \exCiR, \exPreTreatedM\}, \\
                                                                     &\{\nlit \exHighLCount, \exPreTreatedN, \exCiR\}, \\
                                                                     &\{\nottt \exHighLCount, \exPreTreatedN, \exCiR\}\big\}.
  \end{align*}
  Every set in $\NegLX{\blanketop{\conflr{r_1}}}$ contains at least one body literal of every rule $r_j \in \conflr{r_1}$ in a negated form and therefore corresponds to a conflict-resolving \lambdasol-extension for $r_1$ in the sense of (\ref{eqn:kapminusexample}).
\end{example}

Looking at the result in Example~\ref{ex:negbag}, it becomes apparent that the $\NegLXOp$-operator for sets of sets does not necessarily output $\subseteq$-minimal sets, \ie, $\NegLX{\blanketop{\conflr{r_1}}}$ contains $\{\exHighLCount,\exPreTreatedM\}$ and two of its supersets $\{\exHighLCount\,\exCiR,\exPreTreatedM\}$ and $\{\exHighLCount,\exPreTreatedN,\exPreTreatedM\}$.
Since $\{\exHighLCount,\exPreTreatedM\}$ already contains literals of every adversarial rule of $r_1$ in a negated form, additional literals become obsolete.
Consequently, any superset of a set in $\NegLX{\blanketop{\conflr{r_1}}}$ can be safely omitted.
Therefore, for an ELP \pp with a rule $r_i$, we denote the set of all $\subseteq$-minimal sets in \NegLX{\blanketop{\conflr{r_i}}} by $\NegLXmin{\blanketop{\conflr{r_i}}}$.

\begin{corollary}\label{cor:negmin-valid-confl-ext}
  All conflicts in $\confl{r_i}$ are resolved simultaneously if $r_i$ is replaced by
  \begin{align}
    r'_i \in \{ \head[r_i] \when \body[r_i], \lambdasolrule{r_i}. \mid \lambdasolrule{r_i} \in \NegLXmin{\blanketop{\conflr{r_i}}}\}.
  \end{align}
\end{corollary}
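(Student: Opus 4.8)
The plan is to obtain this statement as an immediate consequence of Proposition~\ref{prop:confresolved}, the only additional ingredient being the definition of $\NegLXmin{\blanketop{\conflr{r_i}}}$ as the collection of $\subseteq$-minimal members of $\NegLX{\blanketop{\conflr{r_i}}}$. First I would record the set inclusion $\NegLXmin{\blanketop{\conflr{r_i}}} \subseteq \NegLX{\blanketop{\conflr{r_i}}}$, which holds trivially: selecting the $\subseteq$-minimal elements of a collection returns a subcollection, so each minimal extension is itself a genuine element of $\NegLX{\blanketop{\conflr{r_i}}}$ (and not merely a subset of one).

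Next I would invoke Proposition~\ref{prop:confresolved}, which already establishes that \emph{every} $\lambdasolrule{r_i} \in \NegLX{\blanketop{\conflr{r_i}}}$ is a conflict-resolving \lambdasol-extension for $r_i$, i.e.\ the corresponding \lambdasol-extended rule $r'_i$ satisfies $\confl{r'_i} = \emptyset$ (equivalently $\conflr{r'_i} = \emptyset$). Composing this with the inclusion above, I would conclude that each $\lambdasolrule{r_i} \in \NegLXmin{\blanketop{\conflr{r_i}}}$ is likewise conflict-resolving, so that replacing $r_i$ by any such $r'_i$ yields $\confl{r'_i} = \emptyset$. By the definition of conflict-resolving \lambdasol-extensions, this is precisely the statement that all conflicts in $\confl{r_i}$ are resolved simultaneously, which is the claim.

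There is essentially no obstacle here: the corollary is simply the restriction of Proposition~\ref{prop:confresolved} to those extensions that survive the removal of redundant supersets. Minimality is irrelevant to \emph{soundness} and affects only the economy of the extension, discarding sets that would carry obsolete conflict-preventing literals (as observed in the discussion preceding the corollary, where $\{\exHighLCount,\exPreTreatedM\}$ already covers every adversary of $r_1$). The single point I would make explicit is that a minimal set is a bona fide member of $\NegLX{\blanketop{\conflr{r_i}}}$, so Proposition~\ref{prop:confresolved} applies to it verbatim and no separate monotonicity argument about passing to subsets is required.
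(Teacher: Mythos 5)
Your proof is correct and follows exactly the route the paper intends: the paper states this corollary without a separate proof precisely because, as you observe, $\NegLXmin{\blanketop{\conflr{r_i}}}$ is a subcollection of $\NegLX{\blanketop{\conflr{r_i}}}$, so Proposition~\ref{prop:confresolved} applies verbatim to each minimal extension. Your explicit remark that a $\subseteq$-minimal element is a bona fide member of the larger collection (not merely a subset of one) is the only point requiring care, and you handle it correctly.
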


Every $\lambdasolrule{r_i} \in \NegLXmin{\blanketop{\conflr{r_i}}}$ is, therefore, a \emph{minimal} conflict-resolving \lambdasol-extension for $r_i$.

\begin{example}[Example~\ref{ex:run-solution} continued]\label{ex:run:solution}
  For $r_1$ in $\pp_{\ref{ex:run}}$ of Example~\ref{ex:run}, we get
  \begin{align*}
    \NegLXmin{\blanketop{\conflr{r_1}}} = &\big\{\{\exHighLCount, \exPreTreatedM\}, \\
                                          &\{\nottt \nlit \exHighLCount, \exPreTreatedN, \exPreTreatedM\},\\
                                          &\{\exPreTreatedN, \exCiR, \exPreTreatedM\}, \\
                                          &\{\nlit \exHighLCount, \exPreTreatedN, \exCiR\}, \\
                                          &\{\nottt \exHighLCount, \exPreTreatedN, \exCiR\}\big\}.
  \end{align*}
  Consequently, $\lambdasol_1 = \{\exHighLCount, \exPreTreatedM\}$ is a possible \lambdasol-extension for $r_1$.
\end{example}

  Note that if multiple conflicts are resolved simultaneously where the adversarial rules contain more than one literal, the inclusion of a knowledge expert can be crucial.
  In Example~\ref{ex:run:solution}, any possible \lambdasol-extension for $r_1$ proposes a restriction on the eligibility for therapy $X$ that is slightly stricter than the requirements that are implictly imposed by the adversarial rules.
  That is, the extension by $\lambdasol_1$ leads indeed to the resolution of the conflict. Now however, for instance patients that have a low $L$-count while being allergic to substance $R$ cannot receive the treatment which is not explicitly specified in the original program.
  One reason for these stricter suggestions is that \lambdasol-extension are built to minimally cover the adversarial body literals in order to keep the approach simple and pragmatic.
  It is up to the knowledge expert to decide if the new knowledge that is represented in a \lambdasol-extension is justifiable \wrt to its professional adequacy.
  If the knowledge expert wants to adapt $r_1$ according to all adversarial rules such that $r_1$ ``mirrors'' them, they can consider accepting more than one suggestion which in this case would result in replacing $r_1$ by multiple \lambdasol-extended rules $r'_1$ that guarantee that specific literal combinations are not overlooked, \viz, an additional \lambdasol-extended rule of $r_1$ would have to be added where $\lambdasol = \{\nlit \exHighLCount, \exPreTreatedN, \exCiR\}$ if the knowledge expert wants patients with a low $L$-count to receive treatment $X$ where substance $R$ is contraindicated.

Once all \lambdasol-extensions for a conflict are computed, the corresponding \lambdasol-extended rules are presented to the expert.
They can analyze the suggestions and apply the most suitable solution.
Due to their expertise, the presented suggestions can hint to the underlying cause which can make the decision on the solution straightforward.

Moreover, though it demands for some technical knowledge, the framework could also allow the expert to refine a suggestion and then apply it.
Even though, as shown above, all computed solutions are minimal and guarantee the resolution of the considered conflict, refining a solution based on the expert's knowledge can reinforce the robustness of the solution, \ie, possible  future conflicts after consecutive updates can be prevented.

\begin{example}[Example~\ref{ex:run:solution} continued]\label{ex:run:expert}
    Suppose the knowledge expert has to decide on the solution for the conflict of rule $r_1$.
They are presented with the following suggestions:

\begin{align*}
\rowprefix{r^{(1)}_1} & \exEligX \when \exHighLCount, \exPreTreatedM.\\
 \rowprefix{r^{(2)}_1} & \exEligX \when \nottt \nlit \exHighLCount, \exPreTreatedN, \exPreTreatedM.\\
 \rowprefix{r^{(3)}_1} & \exEligX \when \exPreTreatedN, \exCiR, \exPreTreatedM. \\
 \rowprefix{r^{(4)}_1} & \exEligX \when \nlit \exHighLCount, \exPreTreatedN, \exCiR. \\
 \rowprefix{r^{(5)}_1} & \exEligX \when \nottt \exHighLCount, \exPreTreatedN, \exCiR.
\end{align*}
Presume the following scenario:
Treatment $X$ is applied when the therapy options for advanced condition $A$ that were administered up to now has been exhausted.
Contrary to treatment $M$, which is a more common therapy in this scenario, treatment $N$ is a therapy that is recommended when the standard therapy options have not been effective since $N$ is known to be more complicated and not without risks.
Therefore, the knowledge expert decides that the solution for this conflict should include \exPreTreatedN rather than \exPreTreatedM to reflect that therapy $X$ is only to be recommended if nothing else worked out.
This leaves the expert with rules $r^{(4)}_1$ and $r^{(5)}_1$ as potential candidates.
As therapy $X$ itself is also not without risks, the expert wants to emphasize that any other requirements for this therapy should be defined as strictly as possible, \eg, 
the physicians have to explicitly confirm that specific drugs are not contraindicated.
Accordingly, before suggesting therapy $X$, it should be explicitly known whether the patient has indeed a low L-count.
Therefore, rule $r^{(4)}_1$ as a solution is the most fitting candidate.
\end{example}

Example~\ref{ex:run:expert} highlights the fact that generally the suitability of a solution relies on knowledge that is not necessarily reflected in the program.

  The following example illustrates that even in cases where a custom solution is required, the computed solutions can used as a basis for fitting solutions.

\begin{example}[Example~\ref{ex:run:expert} continued]\label{ex:run:modification}
  Now suppose that at the time of the conflict resolution, studies revealed that in order to prescribe treatment $X$, the patient must not have an allergy towards a substance $S$.
  As $r^{(4)}_1$ was found to be the most suitable suggestion, it suffices to add literal $\nlit \exCiS$ to $\body[r^{(4)}_1]$ in order to require that substance $S$ is not contraindicated.   
  Thus, the expert yields 
\[
  \exEligX \when \nlit \exHighLCount, \exPreTreatedN, \exCiR, \nlit \exCiS.
\]
as the final solution to solve all conflicts of $r_1$ in $\pp_{\ref{ex:run}}$.
This modification consistutes a refinement of a suggested \lambdasol-extension.
\end{example}

  In Sections~\ref{subsec:conflict_order} and~\ref{subsec:lambdasol_scores}, we will show that the problem of knowledge gaps will also pertain to other aspects of conflict resolution.
  We propose to alleviate these shortcomings by implementing suitable interaction mechanisms between the system and the expert.

In the following, we will illustrate a complete conflict resolution step with two more examples.
Example~\ref{ex:complete-1} comprises both short and long conflict-resolving extensions.
Example~\ref{ex:complete-2} shows how redundant extensions are removed via \NegLXminOp.

\begin{example}\label{ex:complete-1}
  Suppose the following ELP $\pp_{\ref{ex:complete-1}}$:

  \begin{align*}
    \rowprefix{r_1} & a \when x, y. \\
    \rowprefix{r_2} & \nlit a \when b, d, \nottt e. \\
    \rowprefix{r_3} & \nlit a \when d, \nottt c, \nottt e. \\
    \rowprefix{r_4} & \nlit a \when b, d, \nottt c.
  \end{align*}

  In $\pp_{\ref{ex:complete-1}}$, rule $r_1$ is in conflict with every other rule:

  \begin{align*}
    \Confl{\pp_{\ref{ex:complete-1}}} = \confl{r_1} &= \{(r_1, r_2),(r_1,r_3),(r_1,r_4)\} \\
    \conflr{r_1} &= \{r_2, r_3, r_4\}
  \end{align*}

  The set $\blanketop{\conflr{r_1}}$ of blankets for $\conflr{r_1}$ consists of the sets $\blanket_1, \blanket_2, \blanket_3, \blanket_4$, with

  \begin{align*}
    \blanket_1 &= \{d\}, \\
    \blanket_2 &= \{b, \nottt c\}, \\
    \blanket_3 &= \{b, \nottt e\}, \\
    \blanket_4 &= \{\nottt c, \nottt e\}.
  \end{align*}

  The negation of each blanket in $\blanketop{\conflr{r_1}}$ results in the following sets:

  \begin{align*}
    \NegLx{\blanket_1} &= \big\{\{\nlit d\}, \{\nottt d\}\big\}\\
    \NegLx{\blanket_2} &= \big\{\{\nlit b, c\}, \{\nottt b, c\}\big\}\\
    \NegLx{\blanket_3} &= \big\{\{\nlit b, e\}, \{\nottt b, e\}\big\}\\
    \NegLx{\blanket_4} &= \big\{\{c, e\}\big\}
  \end{align*}

  Therefore, all conflicts in $\confl{r_1}$ can be resolved by replacing $r_1$ with a rule

  \begin{align*}
    \rowprefix{r'_1} a \when x, y, \lambdasolrule{r_1}.
  \end{align*}

  where

  \begin{align*}
    \lambdasolrule{r_1} \in \NegLX{\blanketop{\conflr{r_1}}} &= \NegLXmin{\blanketop{\conflr{r_1}}} \\
                                                             &= \big\{ \{\nlit d\}, \{\nottt d\}, \{\nlit b, c\}, \{\nottt b, c\}, \{\nlit b, e\}, \{\nottt b, e\},\{c, e\}\big\}.
  \end{align*}

  From a technical point of view, choosing $\lambdasolrule{r_1} \in \{\nlit d, \nottt d\}$ as a conflict-resolving extension for $r_1$ may seem as the most suitable choice prima facie, seeing that the addition of a single literal resolves all three conflicts of $r_1$ simultaneously.
  However, depending on what the rules actually represent, it is possible that a corresponding knowledge expert does not regard $\nlit d$ and $\nottt d$ as viable options for resolving the conflicts of $r_1$, and, instead, picks an extension that contains more than one literal.
\end{example}

\begin{example}\label{ex:complete-2}
  Suppose the following ELP $\pp_{\ref{ex:complete-2}}$:

  \begin{align*}
    \rowprefix{r_1} & x \when a. \\
    \rowprefix{r_2} & \nlit x \when \nlit c, d, \nottt \nlit a, \nottt e. \\
    \rowprefix{r_3} & \nlit x \when a, b, \nlit c. \\
    \rowprefix{r_4} & \nlit x \when d, \nottt \nlit b, \nottt f.
  \end{align*}

  In $\pp_{\ref{ex:complete-2}}$, rule $r_1$ is in conflict with every other rule:

  \begin{align*}
    \Confl{\pp_{\ref{ex:complete-2}}} = \confl{r_1} &= \{(r_1, r_2),(r_1,r_3),(r_1,r_4)\} \\
    \conflr{r_1} &= \{r_2, r_3, r_4\}
  \end{align*}

  The set $\blanketop{\conflr{r_1}}$ of blankets for $\conflr{r_1}$ consists of the sets
  \begin{align*}
    \blanket_1 &= \{\nlit c, d\}, \\
    \blanket_2 &= \{\nlit c, \nottt \nlit b\}, \\
    \blanket_3 &= \{\nlit c, \nottt f\}, \\
    \blanket_4 &= \{b, d\}, \\
    \blanket_5 &= \{b, \nottt \nlit b, \nottt e\}, \text{~and}\\
    \blanket_6 &= \{b, \nottt e, \nottt f\}.
  \end{align*}

  Note that literal $\nottt \nlit a$ does not appear in any blankets of $r_1$ by reason of $\atom{\nottt \nlit a} \in \Atom{\body[r_1]}$.

  The negation of each blanket in $\blanketop{\conflr{r_1}}$ yields the following sets:
  \begin{align*}
    \NegLx{\blanket_1} &= \big\{\{c, \nlit d\}, \{\nottt \nlit c, \nlit d\}, \{c, \nottt d\}, \{\nottt \nlit c, \nottt d\}\big\}\\
    \NegLx{\blanket_2} &= \big\{\{c, \nlit b\}, \{\nottt \nlit c, \nlit b\}\big\}\\
    \NegLx{\blanket_3} &= \big\{\{c, f\}, \{\nottt \nlit c, f\}\big\}\\
    \NegLx{\blanket_4} &= \big\{\{\nlit b, \nlit d\}, \{\nottt b, \nlit d\}, \{\nlit b, \nottt d\}, \{\nottt b, \nottt d\}\big\}\\
    \NegLx{\blanket_5} &= \big\{\{\nlit b, e\}\big\}\\
    \NegLx{\blanket_6} &= \big\{\{\nlit b, e, f\}, \{\nottt b, e, f\}\big\}\\
  \end{align*}

  This results in
  \begin{align*}
    \NegLX{\blanketop{\conflr{r_1}}} = \bigcup\limits_{\blanket \in \blanketop{\conflr{r_1}}} \NegLx{\blanket},
  \end{align*}
  and on the grounds that $\{\nlit b,e,f\} \supset \{\nlit b,e\}$, we get
  \begin{align*}
    \NegLXmin{\blanketop{\conflr{r_1}}} = \NegLX{\blanketop{\conflr{r_1}}} \backslash \{\nlit b, e, f\}.
  \end{align*}
  Therefore, all conflicts in $\confl{r_1}$ can be resolved by replacing $r_1$ with a rule
  \begin{align*}
    \rowprefix{r'_1} x \when a, \lambdasolrule{r_1}.
  \end{align*}
  where
  \begin{align*}
    \lambdasolrule{r_1} \in \NegLXmin{\blanketop{\conflr{r_1}}} = \big\{
        &\{c, \nlit d\}, \{\nottt \nlit c, \nlit d\}, \{c, \nottt d\}, \{\nottt \nlit c, \nottt d\}
        \{c, \nlit b\}, \{\nottt \nlit c, \nlit b\},\\
        &\{c, f\}, \{\nottt \nlit c, f\},
        \{\nlit b, \nlit d\}, \{\nottt b, \nlit d\}, \{\nlit b, \nottt d\}, \{\nottt b, \nottt d\}, \\
        &\{\nlit b, e\},
        \{\nottt b, e, f\}
      \big\}.
  \end{align*}
\end{example}

In summary, the results show that implementing a conflict resolution step via conflict-resolving \lambdasol-extensions and blankets yields a conflict resolution process that possesses the properties (P1)-(P3) that were postulated earlier.
By Proposition~\ref{prop:confresolved}, given a rule $r_i \in \pp$, every $\lambdasolrule{r_i} \in \NegLX{\blanketop{\conflr{r_i}}}$ is a conflict-resolving \lambdasol-extension for $r_i$.
Since by every such conflict resolution step, the number of conflicts is reduced, for the final program $\ppi{n}$ of the conflict resolution process, it holds that $\Confl{\ppi{n}} = \emptyset$.
Moreover, we have shown that for the proposed conflict resolution process $\langle \ppi{1},\ppi{2},\dots,\ppi{n}\rangle$ it even holds that $\Confl{\ppi{i+1}} \subsetneq \Confl{\ppi{i}}$ for $1 \leq i < n$.
Thus, (P1) holds.
As every conflict resolution step consists of solely replacing a conflicting rule with its \lambdasol-extension of the form (\ref{eqn:confresolved-form}), (P2) holds.
By Proposition~\ref{prop:lambda-extrule-applicable}, property (P3) holds also.
Therefore, a conflict resolution process using conflict-resolving \lambdasol-extensions leads to a uniformly non-contradictory program core in a minimally invasive manner.
Including a domain-specific expert into the selection of the most suitable \lambdasol-extensions can furthermore ensure that the resulting non-contradictory program core maintains its professionally adequate knowledge base.

\subsection{Missing \lambdasol-Extension}%
\label{subsec:missing_lambdasol_extension}
Under some circumstances, no \lambdasol-extensions can be generated for a rule $r \in \pp$.
In these cases a blanket for \conflr{r} cannot be found.
Determining whether a blanket for \conflr{r} exists can become quite cumbersome and complex since the relationships of atom-related literals between all adversarial rules have to be examined.
A small fraction of such possible literal interactions that can make the finding of a blanket impossible will be illustrated in the following example.
\begin{example}\label{ex:non-ex-lambda}
  Suppose the following program $\pp_{\ref{ex:non-ex-lambda}}$:
  \begin{align*}
    \rowprefix{r_1} & x \when a, d. &
    \rowprefix{r_2} & \nlit x \when a, b. &
    \rowprefix{r_3} & \nlit x \when \nlit c. &
    \rowprefix{r_4} & \nlit x \when c, \nottt b.
  \end{align*}
  Program $\pp_{\ref{ex:non-ex-lambda}}$ has conflicts $(r_1,r_2), (r_1,r_3), (r_1,r_4)$ and there does not exist a blanket for $r_1$.
  This can be easily shown by looking at each potential candidate literal:
  Literal $a$ is already in $\body[r_1]$ and can therefore not be in a blanket.
  Since atom $b$ (\resp $c$) appears in $\body[\Confl{r_1}]$ as default (\resp classically) complementary literals, \viz $r_2,r_4$ (\resp $r_3,r_4$), $b$ (\resp $c$) cannot be part of a blanket either.
\end{example}
There are several workarounds that allow the resolution of such conflicts nevertheless.
One possible way to resolve such conflicts can be the \emph{partitioning} of conflicts, \ie, given a rule $r$ with several adversaries $\conflr{r}$, one could first determine a subset $Adv' \subset \conflr{r}$ for which a blanket can be computed.
This procedure can be repeated until all conflicts of $r'$ are resolved.
The following example illustrates how partitioning can be used to obtain a solution if a specific rule of the program (in this case $r_1$) should primarily be modified.
\begin{example}[Example~\ref{ex:non-ex-lambda} contd.]\label{ex:non-ex-lambda:solve:part}
  Suppose program $\pp_{\ref{ex:non-ex-lambda}}$ from Example~\ref{ex:non-ex-lambda}.
  First, conflicts $(r_1,r_2)$ and $(r_1,r_3)$ can be resolved simultaneously by generating \lambdasol-extensions from the blankets for $Adv' = \{r_2,r_3\}$.
  Two of the resulting extensions are $\lambdasol_1 = \{\nottt b, \nottt \nlit c\}$ and $\lambdasol_2 = \{\nottt b, c\}$.
  Applying $\lambdasol_1$ to $\pp_{\ref{ex:non-ex-lambda}}$ results in program $\pp_{\ref{ex:non-ex-lambda}}'$:
  \begin{align*}
    \rowprefix{r_1'} & x \when a, d, \nottt b, \nottt \nlit c. &
    \rowprefix{r_2} & \nlit x \when a, b. &
    \rowprefix{r_3} & \nlit x \when \nlit c. &
    \rowprefix{r_4} & \nlit x \when c, \nottt b.
  \end{align*}
  Then, the remaining conflict $(r'_1,r_4)$ can be resolved by applying the unique solution $\lambdasol_3 = \{\nottt c\}$ which in turn results in program $\pp_{\ref{ex:non-ex-lambda}}''$:
  \begin{align*}
    \rowprefix{r_1''} & x \when a, d, \nottt b, \nottt \nlit c, \nottt c. &
    \rowprefix{r_2} & \nlit x \when a, b. &
    \rowprefix{r_3} & \nlit x \when \nlit c. &
    \rowprefix{r_4} & \nlit x \when c, \nottt b.
    \end{align*}
  \end{example}

  Note that the reason that a solution like $\lambdasol = \{\nottt b, \nottt c, \nottt \nlit c\}$ cannot be obtained with the general conflict resolution approach is because of the restriction that blankets cannot contain multiple literals of the same atom even though reconcilable literals do not interfer with the applicability of a rule (as can be seen in Example~\ref{ex:non-ex-lambda}).
  But in order to keep the resolution approach simple and pragmatic, such a restriction on blankets was deemed necessary.

  However, whenever conflicts are separately resolved, the final outcome can very much depend on the actual partitions, the order in which the different conflicts are resolved, and the choice of the respective \lambdasol-extensions along the way.
  \begin{example}[Example~\ref{ex:non-ex-lambda:solve:part} contd.]\label{ex:non-ex-lambda:order}
    Reconsider the results in Example~\ref{ex:non-ex-lambda:solve:part}.
    Note that $(r'_1,r_4)$ in $\pp_{\ref{ex:non-ex-lambda}}'$ only has the unique solution $\lambdasol_3 = \{\nottt c\}$ because for the resolution of the previous conflicts $\lambdasol_1$ was chosen.
    Applying $\lambdasol_2$ to $\pp_{\ref{ex:non-ex-lambda}}$ of Example~\ref{ex:non-ex-lambda} instead of $\lambdasol_1$ would result in program $\pp_{\ref{ex:non-ex-lambda}}'''$:
    \begin{align*}
      \rowprefix{r_1'} & x \when a, d, c, \nottt b. &
      \rowprefix{r_2} & \nlit x \when a, b. &
      \rowprefix{r_3} & \nlit x \when \nlit c. &
      \rowprefix{r_4} & \nlit x \when c, \nottt b.
    \end{align*}
    It is easy to see that now for conflict $(r_1,r_4)$, there exist no possible $\lambdasol$-extensions anymore as $\body[r_4] \subset \body[r_1]$.
  \end{example}
  
  Another possible workaround could be a preceding resolution of conflicts of some adversarial rules $r' \in \conflr{r}$, \ie, if a blanket for \conflr{r} cannot be found, one could resolve the conflicts of some $r' \in \conflr{r}$ first and then resolve the remaining conflicts of $r$.
  This workaround is illustrated in the following example:
  \begin{example}[Example~\ref{ex:non-ex-lambda} contd.]\label{ex:non-ex-lambda:solve:flip}
    Again, suppose program $\pp_{\ref{ex:non-ex-lambda}}$ from Example~\ref{ex:non-ex-lambda}.
    Since $r_1$ contains literal $d$ that does not appear in $\body[\Confl{r_1}]$, instead of resolving the conflicts of $r_1$, the conflicts of the adversarial rules could be resolved consecutively, in this case even with the same \lambdasol-extension, namely conflicts $(r_2,r_1), (r_3,r_1), (r_4,r_1)$ with either \lambdasol-extension $\lambdasol_3 = \{\nottt d\}$ or $\lambdasol_4 = \{\nlit d\}$.
    Therefore, one possible resolved program $\pp_{\ref{ex:non-ex-lambda}}''''$ would consist of the following rules:
    \begin{align*}
      \rowprefix{r_1} & x \when a, d. &
      \rowprefix{r_2'} & \nlit x \when a, b, \nottt d &
      \rowprefix{r_3'} & \nlit x \when \nlit c, \nottt d. &
      \rowprefix{r_4'} & \nlit x \when c, \nottt b, \nottt d.
    \end{align*}
  \end{example}

  Examples~\ref{ex:non-ex-lambda:solve:part} and~\ref{ex:non-ex-lambda:solve:flip} show that if the resolution of conflicts of a rule cannot be carried out in one go, there are possible workarounds.
  But, as shown with Example~\ref{ex:non-ex-lambda:order}, even if the conflict resolution is partitioned into multiple ones, the choice of a solution can influence the success of remaining conflict resolution attempts.
  Thus, in such individual cases, both, the order in which the conflicts are solved and the choice of the solutions have to be executed with care.
  Apart from partitioning conflicts and modifying adversarial rules, there is always the possibility to introduce new atoms to the bodies of a conflict, \eg, adding a new atom $A$ to \body and $\nlit A$ to the bodies $\body[r']$ of all adversaries.
  This, however, would violate our requirement of extensions being informative.

    Improving our strategies with respect to such deficiencies right from the beginning, \eg, finding a most suitable order in which conflicts are resolved, is part of our ongoing work.
    Some enhancements of our approach are considered in the next section.
\section{Enhancements}%
\label{sec:extensions}
In the following, we will outline possible enhancements to the presented resolution approach.
Section~\ref{subsec:m_to_n_conflicts} outlines how the multiple resolution processes can be used to resolve many-to-many conflicts and in Section~\ref{ssec:conflict_resolution_with_constraints}, we show how the current resolution approach can also be used to deal with inconsistency-causing constraints.
  In the last two subsections, we will show methods to compute additional information for the knowledge expert to find the most suitable solutions efficiently.
  In Section~\ref{subsec:conflict_order}, we will present a method to sort the detected conflicts in a program by the impact the conflicting rules have on other rules.
  We extend this method in Section~\ref{subsec:lambdasol_scores} to introduce the concept of \emph{\lambdasol-scores} that enable the ordering of \lambdasol-extensions.
\subsection{Many-To-Many Conflicts}%
\label{subsec:m_to_n_conflicts}
  So far, the presented approach handles \emph{1-to-many} conflicts, \ie, the presented approach modifies a single rule $r$ such that the conflicts with $n$ different rules $r' \in \conflr{r}$ are resolved simultaneously.
  It is conceivable that instead of a single rule $r$, a program \pp can contain a set $\calR$ of $m$ different rules $r$ with $\head = L$ ($m > 1$) which results in considering what we will call \emph{many-to-many} conflicts.
  Currently, in those cases the resolution process have to be applied on each such $r \in \calR$ separately and consecutively.
  Depending on the individual case, it can also be reasonable to modify a rule $r' \in \conflr{r}$.
  \begin{example}\label{ex:n-to-m-conflicts}
    Consider the following ELP:\@
    \begin{align*}
      \progprefix{\pp_{\ref{ex:n-to-m-conflicts}}}
      \rowprefix{r_1} x \when a, e. \qquad
      \rowprefix{r_2} x \when b, e. \qquad
      \rowprefix{r_3} \nlit x \when f. \qquad
      \rowprefix{r_4} \nlit x \when g. \qquad
    \end{align*}
    In $\pp_{\ref{ex:n-to-m-conflicts}}$, there are conflicts between $\{r_1,r_2\}$ and $\{r_3,r_4\}$.
    One way to resolve all conflicts is to extend $r_1$ and $r_2$ by a \lambdasol-extension, respectively.
    Since the bodies of $r_3$ and $r_4$ do not contain atom-related literals, each \lambdasol-extension will contain two literals.
    If, on the other hand, the knowledge expert decides to modify $r_3$ and $r_4$ instead, a possible \lambdasol-extension for both rules can consist of a single literal, \viz either $\nottt e$ or $\nlit e$.
  \end{example}

  Example~\ref{ex:n-to-m-conflicts} shows that the order in a conflict pair $(r,r')$ can decide over the ``complexity'' of its resolution.
\subsection{Conflict Resolution with Constraints}%
\label{ssec:conflict_resolution_with_constraints}
In general, (integrity) constraints in answer set programs serve to weed out unwanted answer sets.
One way to implement such a functionality is to map a constraint like $\when a,b.$ to a rule $z \when a,b, \nottt z.$ where $z$ is a newly introduced atom~\citep{GebserKaminskiKaufmannSchaub2012}.
Even though the derivation of $z$ due to such constraints technically causes incoherence and not contradiction, we will cover the handling of a specific type of constraints $r_c$ as we can show that the presented conflict resolution approach can be used to prevent the advent of incoherence due to such rules $r_c$.
In the following, we will examine constraints of the form
\begin{align}\label{eqn:constraint-form}
  \rowprefix{r_c} \when K,L.
\end{align}
where $K,L$ are internal atoms.

Recall that in logic programs, classical negation is actually syntactic sugar:
Given the normal logic program \ppgen of \pp, classical negation can be incorporated by adding a constraint $\when A,A'.$ for every atom $A \in \pAtoms$~\citep{Gelfond1991}.
\begin{example}\label{ex:impl-class-neg}
  Suppose the following program $\pp_{\ref{ex:impl-class-neg}}$:
  \begin{align*}
    \rowprefix{r_1} & x \when a. &
    \rowprefix{r_2} & \nlit x \when b.
  \end{align*}
  This program is actually interpreted as a positive program $\pp_{\ref{ex:impl-class-neg}}^+$:
  \begin{align*}
    \rowprefix{r_1} & x \when a. &
    \rowprefix{r_2'} & x' \when b. &
    \rowprefix{r_x} & \when x, x'.
  \end{align*}
\end{example}
Example~\ref{ex:impl-class-neg} shows that resolving the conflict between $(r_1,r_2')$ also ensures that $r_x$ is satisfied.
As a generalization of this idea, we can therefore consider constraints as implicit pointers to potential conflicts.
Consequently, the satisfaction of a constraint of form (\ref{eqn:constraint-form}) can be ensured by treating every pair of rules $r,r'$ in \pp with $\head = L, \head[r'] = K$ as \emph{implicit conflicts $(r,r')$} and resolving them using the method for the resolution of conflicts via \lambdasol-extensions.

\begin{example}\label{ex:cons:case0:0}
  Suppose the following program $\pp_{\ref{ex:cons:case0:0}}$ with internal atoms $a,d$ and external atoms $b,c,e,f$:
  \begin{align*}
    \rowprefix{r_1} & a \when b, \nottt c. &
    \rowprefix{r_2} & a \when b, \nottt f. &
    \rowprefix{r_3} & d \when e. &
    \rowprefix{r_4} \when a, d.
  \end{align*}
  Then, $(r_3,r_1)$ and $(r_3,r_2)$ can be interpreted as conflicts.
  To resolve these two conflicts simultaneously, $\{\nottt b\}$ can be added as a \lambdasol-extension to $r_3$.
  The modified program $\pp_{\ref{ex:cons:case0:0}}'$ then has the following rules:
    \begin{align*}
      \rowprefix{r_1} & a \when b, \nottt c. &
      \rowprefix{r_2} & a \when b, \nottt f. &
      \rowprefix{r_3'} & d \when e, \nottt b. &
      \rowprefix{r_4} & \when a, d.
    \end{align*}
  \end{example}
  It is easy to see that a constraint of form (\ref{eqn:constraint-form}) can be omitted once the implicit conflicts of the constraint are resolved with fitting \lambdasol-extensions.
  Thus, rule $r_4$ is redundant in program $\pp_{\ref{ex:cons:case0:0}}'$, but one has to keep in mind that future, more comprehensive updates can lead to answer sets that contain literals $a$ and $d$.
\subsection{Conflict Order}%
\label{subsec:conflict_order}
  As stated at the beginning, the presented approach is a basic method that can be extended in many respects depending on the requirements of the user.
  A useful functionality regarding the workflow of conflict resolution is the order in which the conflicts are presented to the expert.
  It is important that the knowledge expert understands how the order of the conflicts is generated as any enhancement in the proposed framework must not negatively influence the actual decision regarding a resolution but rather aid the expert in finding the most suitable solution efficiently.
  One easily comprehensible implementation utilizes the \emph{affected rule count} (inspired by~\citep{AbdelhalimTraoreSayed2009}) of the conflicting literals that reflects how many rules are potentially affected by a specific rule.
  For that, we first define the set $\rulset$ of all rules that are \emph{affected by a rule $r_1$} and compute the rule impact of $r_1$ by counting its affected rules. 
  \begin{definition}[Affected Rules and Affected Rule Count]
    The set  $\rulset_{r_1}$ of \emph{rules affected by a rule $r_1 \in \pp$} is defined as 
    \begin{align*}
    &\rulset_{r_1} = \{r \in \pp \mid \atom{\head[r_1]} \in \Atom{\body}, 
    (\body[r_1] \cup \body) \text{~is consistent}\}.
    \end{align*}
    The \emph{affected rule count} $\ar{r_1}$ of $r_1$ is then defined by the number of affected rules, \viz
  \begin{align*}
    \ar{r_1} = \vert \rulset_{r_1} \vert.
  \end{align*}
  \end{definition}

  \begin{example}[Example~\ref{ex:run} continued]\label{ex:eff-1}
    Let $\pp_{\ref{ex:eff-1}}$ be program $\pp_{\ref{ex:run}}$ extended by the following rules:
    \begin{align*}
      \rowprefix{r_5} \exDrugB &\when \exEligX, \exCondAAdv, \nlit \exDrugE. \\
      \rowprefix{r_6} \exDrugC &\when \exEligX, \nottt \exCondAAdv, \exHighLCount. \\
      \rowprefix{r_7} \exDrugD &\when \nlit \exEligX, \exCondAAdv, \nlit\exPreTreatedN.
    \end{align*}
    In $\pp_{\ref{ex:eff-1}}$, rules $r_5$ and $r_7$ are affected by $r_1$.
    Rule $r_6$ is not affected as $\body[r_1]$ and $\body[r_6]$ are not consistent due to atom $\exCondAAdv$, \ie, literal $\exCondAAdv$ occurs in $\bodyp[r_1]{+}$ and in $\bodyp[r_6]{-}$.
    Ergo for $r_1$, we yield the affected rule count $\ar{r_1} = 2$. 
  \end{example}
  
  The affected rule count can provide the expert with a first impression (or possibly confirm own conjectures)  of the possible impact that the given conflict and consequently the changes made to one of the conflicting rules have within the program.
  It can contribute additional information to the expert to anticipate the effect of a possible solution.
  Hence when using this order-criterion, it is on the expert to decide which conflict they want to solve first, \eg, they can handle those conflicts with the least impact on other rules first.
  Additionally, such computed values can also help to decide which rules of a conflict to modify, \eg, given a rule $r\in \pp$, it can be beneficial to modify one rule in $\confl{r}$ first and then modify $r$ in order to resolve all remaining conflicts of $r$ (see Section~\ref{subsec:m_to_n_conflicts}).
\subsection{\lambdasol-Scores}%
\label{subsec:lambdasol_scores}
In the previous section, we presented a method that aims to improve the user's first major task during the resolution process which is choosing a conflict out of all detected conflicts.
In the next step, the user has to decide on a solution for a picked conflict.
To facilitate the finding of a most suitable solution for a chosen conflict $(r,r')$, an ordering of the computed suggestions can be established.
For that one can assign a score to each \lambdasol-extension that is calculated using different criteria.
In this section, we will outline one prototypical enhancement that adds a numerical value to each generated \lambdasol-extension named \emph{\lambdasol-score}.
  We will present a basic approach to compute \lambdasol-scores by utilizing the previously introduced affected rule count to quantify the \emph{effect} a rule modification can have. 
Suppose a conflict $(r_1,r_2)$ and a solution $r'_1$ which can either be a computed solution (\lambdasol-extended rule) or a modified respectively custom solution for this conflict. 
In order to reflect the effect this solution can have on the overall program, one can look at the size of the difference between the affected rules of $r_1$ and the \lambdasol-extended rule $r_1'$ which we will call the \emph{effect of the modification of $r_1$ to $r'_1$}.
\begin{definition}[Effect]
  Let $(r_1,r_2)$ be a conflict in \pp and $r'_1$ a generated, \lambdasol-extended rule of $r_1$. 
Let furthermore $\rulset_{r_1}$ and $\rulset_{r'_1}$ be the respective sets of affected rules. 
Then, the \emph{effect} $\eff{r_1}{r'_1}$ of this modification is defined by
\begin{align*}
  \eff{r_1}{r'_1} = \vert \rulset_{r_1} - \rulset_{r'_1} \vert.  
\end{align*}
\end{definition}

Note that as \lambdasol-extended rules always expand the body of the original rule and therefore are more specific, any generated solution will affect the same rules or a subset of the rules that are affected by the original rule, \ie, $\rulset_{r'_1} \subseteq \rulset_{r_1}$ holds.
The higher the effect of a solution, the more rules are not affected anymore after the modification \wrt the rules affected by the original rule.
The effect of a suggested modification can therefore be used as the \lambdasol-score of the suggestion.
If for example an expert wants to examine the most cautious solutions, the solutions with the least effect can be presented first.
\begin{example}[Example~\ref{ex:run:solution} continued]\label{ex:eff-2}
  Suppose, the expert wants to resolve the conflict in $\pp_{\ref{ex:eff-1}}$ by changing $r_1$ to $r'_1$ with 
  \[\rowprefix{r'_1} \exEligX \when \nlit \exHighLCount, \exPreTreatedN, \exCiR..\]
  As rule $r_5$ is the only rule affected by $r'_1$, we get the effect 
  \[\eff{r_1}{r'_1} = \vert \{r_5,r_7\} - \{r_5\}\vert = 1.\] 
\end{example} 

The effect of a modification tells the expert how impactful a change is regarding the affected rules, \ie, how many additional and less rules are affected after the changes.
The interpretation of the effect of modifications that is offered at this point can be seen as a first proposal to produce the \lambdasol-scores.
The corresponding method for their computation can thus serve as groundwork where individual enhancements and modifications are possible.%
The presented definition of modification effects is an illustrative example to show how \lambdasol-extensions can hypothetically be ordered.
For a given conflict, the user can inspect the suggestions sorted by their effect, starting with those that have the least effect. 
This way, they can look for and prioritize solutions that have the least impact on the program.

In analogue to the previously described method for ordering conflicts, any implementation for the computation of \lambdasol-scores must not negatively influence the expert regarding their decisions, \ie, enhancements like ordering conflicts and suggestions should not incentivize the expert to choose solutions with the sole intention to resolve all conflicts in the technically easiest and quickest way possible without considering the underlying errors and problems that led to the conflict, nor should it bias the expert's thought process in any other negative way.
  In other words, the goal of such enhancements is to improve the finding of solutions that reflect the actual professional expertise best and simultaneously provide robustness for future updates (\ie mitigate the potential for future conflicts), while not compromising any decision on a solution during the process.

  Establishing a score for each suggestion allows to generate an order over all suggestions which in turn can be used to show the knowledge expert those suggestions first that are more relevant according to their criteria.
  If a \emph{semi-automated} approach is wanted, the knowledge expert could also define a specific score threshold so that suggestions can be applied without the expert's involvement.
  Then, if there exist suggestions \lambdasol with a score below such a threshold, the conflict can be resolved directly by using every \lambdasol with the lowest score as a solution.
  Enhancements like these that gather and use additional \emph{metadata} can help to add a highly customizable abstraction layer between the knowledge expert and the actual knowledge base where the expert can input information about the program elements in order to resolve a conflict without the need for any technical knowledge about answer set programming while ensuring that the resulting rules are still representing correct knowledge in the professional sense.

\section{Conclusion and Future Work}%
\label{sec:conclusion_and_future_work}
In this paper, we proposed a method to modify answer set programs such that they remain consistent given any (allowed) instance data.
We provided an analysis how contradictions in a program can arise and discussed the necessary requirements rules have to satisfy in order to prevent contradictions while keeping the degree of change to a minimum.
Based on that, we arrived at the notion of \lambdasol-extensions and we presented a strategy that gathers all conflict-resolving \lambdasol-extensions for a conflicting rule that comprise the atoms occurring in the conflicting rules.
Extending a rule by a corresponding conflict-resolving \lambdasol-extension resolves all conflicts of this rule simultaneously.
As a consequence, a conflict resolution process that uses the proposed strategy for computing conflict-resolving \lambdasol-extensions eventually yields a uniformly non-contradictory program core.

The presented approach should be seen as one of several building blocks in a larger (application) framework for updating and maintaining answer set programs.
The approach enables the knowledge expert to inspect contradictory statements in a program and update rules interactively in view of new information given by an update.
The suggestions that are generated by this algorithm can be viewed as a baseline for modifications.
As mentioned in Example~\ref{ex:run:solution}, applying a suggested \lambdasol-extension to a rule $r$ can lead to a ``stricter'' \lambdasol-extended rule $r'$.
Whether this makes sense or not, is highly dependent on the professional meaning of these rules.
Our approach allows the knowledge expert to decide whether a suggested enhancement should be applied as is or additional (manual) modifications should follow the applied suggestion.
But as shown with Proposition~\ref{prop:confresolved}, every \lambdasol-extension satisfies the minimal requirement of resolving the corresponding conflicts simultaneously and can therefore used as a reliable starting point.

This paper adresses one part of an interactive solution for conflict resolution in logic programs as proposed in \citep{Thevapalan2020}.
The running example illustrates that the proposed framework can pose a highly beneficial tool in the medical sector and other domains.
We argue that any area where experts face complex decisions based on knowledge that is not solely comprised of facts but also on practical knowledge (experience) and continually growing and changing (adapted) expertise can highly profit from such a framework to efficiently build up and maintain an adequate knowledge base.%
There still remains a number of issues that need to be adressed in order to obtain an interactive conflict resolution framework.
In the presented conflict resolution strategy, the body of one of the conflicting rules is extended.
One immediate open issue is, therefore, compiling strategies for the remaining types of program modifications, namely reduction of rule bodies, replacing body literals, removing rules as a whole and adding new rules.

As outlined Section~\ref{sec:extensions}, we see numerous ways to extend the presented approach.
In the future, we want to adapt the method of conflict resolution to resolve many-to-many conflicts in a more convenient way that generates \lambdasol-extensions based on all involved rules that at best does not require the repeated use of a resolution.
Additionally, we want to work out methods to interactively compute \lambdasol-scores (Section~\ref{subsec:lambdasol_scores}) in order to find suitable solutions more efficiently.

In each conflict resolution step during a conflict resolution process, solutions for a conflict are computed independently of other conflicts. 
In order to contemplate dependencies between conflicts, future work will also include the development of suitable heuristics to determine which rules of a conflict to modify (since a conflict-relation is symmetrical) and refine the order in which the different conflicts should be resolved.
Besides taking into account the connections between conflicts, such heuristics could also  consider individual preferences and conditions given by the expert, \eg, a set of rules that shall not be modified, rule weights \etc

Furthermore, it is conceivable that modifying the conflicting rules themselves is not always the best solution but rather other rules of the program that are responsible for the simultaneous satisfaction of conflicting rules should be altered.
Thus, one interesting future work includes the expansion of the conflict resolution approach by computing suggestions to modify those rules that are ``co-responsible'' for a potential conflict.
For the analysis of indirect causes, additional methods, like the usage of explanation graphs \citep{SchulzToni2016,PontelliSonElKhatib2009} could be useful.

Aside from contradictions, extended logic program can also become inconsistent due to incoherence.
A method to pinpoint the causes of incoherence in logic programs and strategies to reestablish coherence in a program are, therefore, also part of our current research.

The other major functionality that has yet to be developed is the actual interaction process between the expert and the framework.
These interactions must tackle two problems:
explaining the cause of the respective conflict to the user and choosing the most suitable solution for it.
Instead of burden the user with all available information, we suggest a dialogical approach where the user can reach an informed decision after a back-and-forth dialogue with the framework.
Using argumentative dialogue theories \citep{Caminada2017,ModgilCaminada2009,WaltonKrabbe1995} could enable the framework to ``communicate'' with the user where they can ascertain the underlying causes of the conflict and successively attain the most suitable suggestion.
\pagebreak
\bibliographystyle{acmtrans}
\bibliography{references}%
\label{lastpage}
\end{document}